\documentclass{article}

\usepackage[final]{neurips_2024}
\usepackage[utf8]{inputenc}
\usepackage[T1]{fontenc}
\usepackage{hyperref}
\usepackage{url}
\usepackage{booktabs}
\usepackage{amsfonts}
\usepackage{nicefrac}
\usepackage{microtype}
\usepackage{xcolor}
\usepackage{graphicx}
\usepackage{amsmath}
\usepackage{amssymb}
\usepackage{amsthm}

\newtheorem{theorem}{Theorem}
\newtheorem{lemma}[theorem]{Lemma}
\newtheorem{proposition}[theorem]{Proposition}
\usepackage{algorithm}
\usepackage{algorithmic}
\usepackage{subcaption}
\usepackage{multirow}
\usepackage{colortbl}

\definecolor{tableheader}{RGB}{46,134,171}
\definecolor{tablerowalt}{RGB}{245,248,250}
\definecolor{bestresult}{RGB}{46,134,171}

\title{ProHiFlo: Hierarchical Flow Matching with Functional Guidance for De Novo Protein Generation}

\author{
  Chuanzhen Wang\textsuperscript{3}, Meade Cleti\textsuperscript{1,*}, Pete Jano\textsuperscript{2}, \\
  \textsuperscript{1}Arizona State University\\
  \textsuperscript{2}University of Wisconsin-Madison\\
  \textsuperscript{3}Tongji University\\
}

\begin{document}

\maketitle

\begin{abstract}
De novo protein generation has transformative potential in therapeutic design, enzyme engineering, and synthetic biology. While diffusion-based and flow matching approaches have achieved progress, they typically operate at single resolution and lack mechanisms for incorporating functional constraints. We introduce ProHiFlo, a hierarchical flow matching framework with three innovations: (1) coarse-to-fine generation that models backbone geometry before refining to all-atom coordinates, reducing computational cost while maintaining accuracy; (2) functional guidance leveraging pretrained predictors to steer generation toward desired properties without retraining; (3) adaptive SE(3)-equivariant architecture for efficient multi-scale processing. Experiments on unconditional generation, motif scaffolding, and functional design demonstrate state-of-the-art performance while requiring 4$\times$ fewer sampling steps. On enzyme active site scaffolding, ProHiFlo achieves 58.9\% success rate compared to 41.2\% for RFDiffusion.
\end{abstract}

\section{Introduction}

The ability to design novel proteins with desired structures and functions represents a long-standing goal in computational biology~\cite{huang2016coming, zhang2025advanced}. Recent advances in deep learning have revolutionized this field, with structure prediction methods like AlphaFold2~\cite{jumper2021highly,chen2025r2i} achieving near-experimental accuracy, and generative models enabling the creation of entirely new protein structures~\cite{watson2023novo, ingraham2023illuminating, alamdari2023protein}.

Among generative approaches, diffusion models have emerged as particularly powerful tools for protein structure generation~\cite{peng2024noise,you2026drdgrl}. RFDiffusion~\cite{watson2023novo,zhang2026memmark} and Chroma~\cite{ingraham2023illuminating} have demonstrated the ability to generate diverse, designable protein structures by adapting denoising diffusion probabilistic models~\cite{ho2020denoising} to the SE(3) manifold of protein backbone conformations. Concurrently, flow matching~\cite{lipman2023flow} has emerged as an efficient alternative that enables simulation-free training and faster sampling through learning continuous normalizing flows.

Despite these advances, current methods face several limitations. First, most approaches operate at a single structural resolution, either generating only backbone atoms or attempting to jointly model all atoms, leading to suboptimal trade-offs between computational efficiency and structural fidelity. Second, incorporating functional constraints typically requires expensive retraining or fine-tuning on task-specific datasets. Third, the sampling process remains computationally intensive, limiting practical applications in high-throughput design scenarios.

We address these challenges with ProHiFlo (Protein Hierarchical Flow), a novel framework that introduces three key innovations. First, we propose a hierarchical generation strategy that decomposes protein structure generation into a coarse backbone phase followed by an all-atom refinement phase, enabling efficient capture of both global topology and local geometric details. Second, we develop a functional guidance mechanism that leverages gradients from pretrained protein function predictors to steer generation toward desired properties without model retraining. Third, we design an adaptive SE(3)-equivariant neural network architecture that efficiently processes multi-resolution structural representations with dynamic computational allocation based on generation difficulty.

Our contributions can be summarized as follows. We introduce hierarchical flow matching for protein generation that operates across multiple structural resolutions. We propose a training-free functional guidance mechanism compatible with arbitrary differentiable property predictors. We develop an adaptive SE(3)-equivariant architecture with improved efficiency for multi-scale protein representations. Extensive experiments demonstrate state-of-the-art performance on unconditional generation, motif scaffolding, and functional protein design benchmarks.

\begin{figure}[t]
\centering
\includegraphics[width=\textwidth]{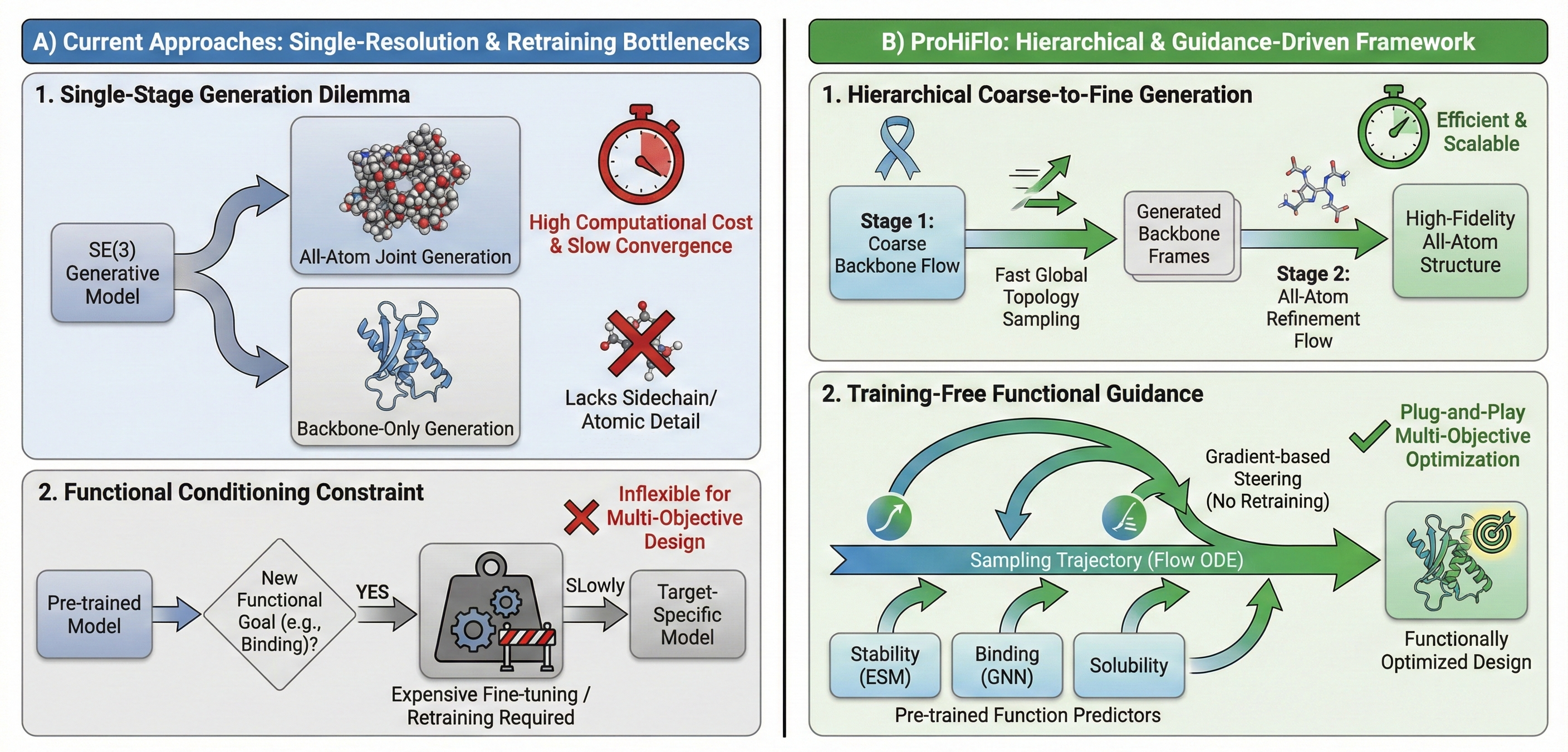}
\caption{\textbf{Motivation for ProHiFlo.} Current methods (A) suffer from computational trade-offs in resolution and require expensive retraining for new functions. ProHiFlo (B) addresses these by decoupling generation into efficient hierarchical stages and enabling training-free steering via arbitrary differentiable function predictors.}
\label{fig:motivation}
\end{figure}

\section{Related Work}

\paragraph{Protein Structure Prediction and Representation Learning.}
The protein structure prediction problem has been largely addressed by AlphaFold2~\cite{jumper2021highly} and ESMFold~\cite{lin2023evolutionary}, which achieve near-experimental accuracy by leveraging evolutionary information and attention-based architectures~\cite{vaswani2017attention}. These advances have catalyzed progress in protein representation learning, with models like ESM-2~\cite{lin2023evolutionary} and ProtTrans~\cite{elnaggar2022prottrans} learning rich sequence embeddings from large-scale protein databases. Structure-based representations have also been explored through graph neural networks~\cite{gligorijevic2021structure, jing2021equivariant}, which naturally capture the relational nature of protein structures.

\paragraph{Generative Models for Protein Structure.}
Early generative approaches employed variational autoencoders~\cite{hawkins2021generating, eguchi2022ig} and autoregressive models~\cite{ingraham2019generative} for protein generation. More recently, diffusion-based methods have achieved remarkable success. FrameDiff~\cite{yim2023se,chen2025mvi} introduced SE(3) diffusion for protein backbone generation, while RFDiffusion~\cite{watson2023novo,huang2026gui,chen2025superflow} leveraged the pretrained RoseTTAFold architecture to achieve state-of-the-art designability. Chroma~\cite{ingraham2023illuminating} proposed a programmable generative model with diverse conditioning capabilities. Genie~\cite{lin2023generating} and Genie 2~\cite{lin2024out} developed oriented residue cloud representations for efficient diffusion. Flow matching approaches have also been explored, with FoldFlow~\cite{bose2024se3} and FrameFlow~\cite{yim2023fast} demonstrating improved sampling efficiency. Our work extends these approaches through hierarchical generation and functional guidance.

\paragraph{Conditional Protein Generation.}
Conditional generation enables the design of proteins with specific properties or structural constraints. Motif scaffolding, which designs proteins around functional motifs, has been addressed by RFDiffusion~\cite{watson2023novo} through fine-tuning and by Chroma~\cite{ingraham2023illuminating} through custom energy functions. ProteinGenerator~\cite{lisanza2023joint} jointly generates sequence and structure for improved designability. Recent work has explored language-guided generation~\cite{ferruz2022protgpt2} and property-conditioned design~\cite{gruver2024protein}. Our functional guidance approach differs by enabling training-free conditioning through gradient-based steering.

\paragraph{SE(3)-Equivariant Neural Networks.}
Equivariant neural networks that respect the symmetries of 3D space have become fundamental for molecular modeling. EGNN~\cite{satorras2021n} proposed an efficient E(n)-equivariant architecture, while SE(3)-Transformers~\cite{fuchs2020se} and Equiformer~\cite{liao2023equiformer} developed attention-based equivariant layers. For proteins specifically, IPA (Invariant Point Attention)~\cite{jumper2021highly} has been widely adopted. GVP (Geometric Vector Perceptron)~\cite{jing2021equivariant} provides an alternative that efficiently processes vector features. Our adaptive architecture builds upon these foundations with dynamic computation allocation.

\section{Preliminaries}

\subsection{Protein Structure Representation}

A protein structure can be represented at multiple resolutions. At the coarsest level, the backbone is described by a sequence of residue frames $\{T_i\}_{i=1}^{N}$, where each frame $T_i = (R_i, \mathbf{t}_i) \in \text{SE}(3)$ consists of a rotation matrix $R_i \in \text{SO}(3)$ and translation vector $\mathbf{t}_i \in \mathbb{R}^3$. The frame orientation is typically defined by the local coordinate system formed by backbone atoms (N, C$_\alpha$, C). At the all-atom level, the structure includes all heavy atoms with coordinates $\{\mathbf{x}_j\}_{j=1}^{M}$ where $M$ is the total number of atoms.

\subsection{Flow Matching}

Flow matching~\cite{lipman2023flow} provides a simulation-free approach for training continuous normalizing flows. Given a data distribution $p_1(\mathbf{x})$ and a simple prior $p_0(\mathbf{x})$ (typically Gaussian), flow matching learns a time-dependent vector field $v_t(\mathbf{x})$ that generates a probability path $p_t$ interpolating between $p_0$ and $p_1$. The training objective is:
\begin{equation}
\mathcal{L}_{\text{FM}} = \mathbb{E}_{t, p_t(\mathbf{x})} \left[ \| v_\theta(\mathbf{x}, t) - u_t(\mathbf{x}) \|^2 \right]
\end{equation}
where $u_t(\mathbf{x})$ is a target vector field that generates the probability path. In practice, conditional flow matching~\cite{lipman2023flow} is used, where paths are conditioned on data samples $\mathbf{x}_1$:
\begin{equation}
\mathcal{L}_{\text{CFM}} = \mathbb{E}_{t, p(\mathbf{x}_1), p_t(\mathbf{x}|\mathbf{x}_1)} \left[ \| v_\theta(\mathbf{x}, t) - u_t(\mathbf{x}|\mathbf{x}_1) \|^2 \right]
\end{equation}
For optimal transport paths, the conditional vector field simplifies to $u_t(\mathbf{x}|\mathbf{x}_1) = \mathbf{x}_1 - \mathbf{x}_0$, enabling straight-line interpolation with constant velocity.

\subsection{SE(3) Flow Matching for Proteins}

Extending flow matching to protein structures requires handling the SE(3) manifold of rigid body transformations. Following~\cite{bose2024se3, yim2023fast}, we parameterize rotations using the exponential map and define flows on the tangent space. For a frame $T = (R, \mathbf{t})$, the interpolation is:
\begin{align}
R_t &= R_0 \exp(t \cdot \log(R_0^T R_1)) \\
\mathbf{t}_t &= (1-t)\mathbf{t}_0 + t\mathbf{t}_1
\end{align}
The vector field $v_t(T)$ consists of a rotational component in the Lie algebra $\mathfrak{so}(3)$ and a translational component in $\mathbb{R}^3$.

\section{Method}

\subsection{Overview}

ProHiFlo generates protein structures through a two-stage hierarchical process, as illustrated in Figure~\ref{fig:overview}. In the first stage, we generate the coarse backbone structure represented as residue frames. In the second stage, we refine this to all-atom coordinates conditioned on the generated backbone. Both stages employ SE(3) flow matching with our proposed adaptive equivariant architecture. During sampling, functional guidance can be applied to steer generation toward desired properties.

\begin{figure}[t]
\centering
\includegraphics[width=\textwidth]{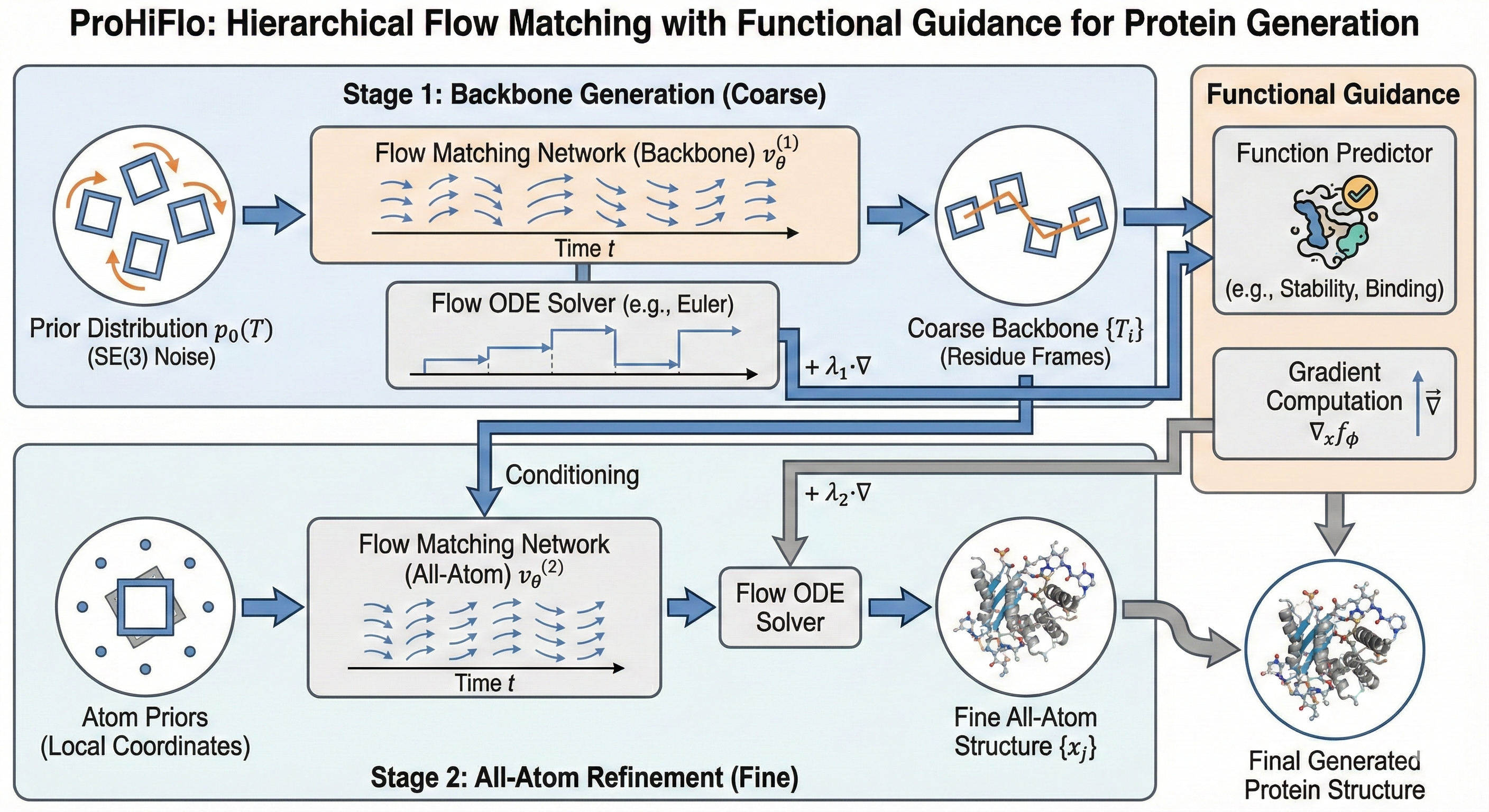}
\caption{Overview of the ProHiFlo architecture. Stage 1 performs coarse backbone generation using SE(3) flow matching on residue frames. Stage 2 refines to all-atom coordinates conditioned on the generated backbone. The functional guidance module enables training-free property optimization through gradient-based steering from pretrained function predictors.}
\label{fig:overview}
\end{figure}

\subsection{Hierarchical Flow Matching}

\paragraph{Stage 1: Backbone Generation.}
The first stage generates residue frames $\{T_i\}_{i=1}^{N}$ from a prior distribution. We initialize frames from a centered Gaussian distribution on SE(3):
\begin{equation}
p_0(T) = \mathcal{N}(\mathbf{t}; \mathbf{0}, \sigma_t^2 I) \cdot \mathcal{U}_{\text{SO}(3)}(R)
\end{equation}
where $\mathcal{U}_{\text{SO}(3)}$ denotes the uniform distribution on SO(3). The vector field network $v_\theta^{(1)}$ predicts frame updates:
\begin{equation}
v_\theta^{(1)}(\{T_i\}_{i=1}^{N}, t) = \{(\omega_i, \mathbf{v}_i)\}_{i=1}^{N}
\end{equation}
where $\omega_i \in \mathfrak{so}(3)$ is the angular velocity and $\mathbf{v}_i \in \mathbb{R}^3$ is the linear velocity for frame $i$.

\paragraph{Stage 2: All-Atom Refinement.}
Given the generated backbone frames $\{\hat{T}_i\}$, the second stage generates all-atom coordinates conditioned on the backbone structure. We parameterize atoms relative to their residue frames, decomposing atomic positions as:
\begin{equation}
\mathbf{x}_j = R_i \mathbf{x}_j^{\text{local}} + \mathbf{t}_i
\end{equation}
where $\mathbf{x}_j^{\text{local}}$ is the position in the local frame of residue $i$.

\paragraph{Conditioning Mechanism.}
The backbone information is injected into Stage 2 through three complementary mechanisms: (1) \textit{Frame embedding}: Each residue's frame $(R_i, \mathbf{t}_i)$ is encoded into a 128-dimensional embedding via invariant features (inter-frame distances and relative orientations), concatenated with atom features; (2) \textit{Cross-attention}: Atomic features attend to a sequence of backbone frame representations through multi-head cross-attention layers, enabling long-range backbone-atom communication; (3) \textit{Local geometric features}: For each atom, we compute distances to the three backbone atoms (N, C$_\alpha$, C) of its residue and neighboring residues, providing fine-grained positional context. The refinement network $v_\theta^{(2)}$ predicts displacements in local coordinates, ensuring SE(3) equivariance of the overall generation.

\paragraph{Hierarchical Training.}
Both stages are trained independently with conditional flow matching objectives:
\begin{align}
\mathcal{L}_{\text{backbone}} &= \mathbb{E}_{t, \{T_i^{(1)}\}} \left[ \sum_{i=1}^{N} \| v_\theta^{(1)}(T_i, t) - u_t(T_i) \|^2 \right] \\
\mathcal{L}_{\text{allatom}} &= \mathbb{E}_{t, \{\mathbf{x}_j^{(1)}\}, \{T_i\}} \left[ \sum_{j=1}^{M} \| v_\theta^{(2)}(\mathbf{x}_j, t | \{T_i\}) - u_t(\mathbf{x}_j) \|^2 \right]
\end{align}
The hierarchical decomposition reduces the effective dimensionality at each stage, enabling more efficient learning and sampling.

\subsection{Functional Guidance}

A key advantage of our framework is the ability to incorporate functional constraints during sampling without retraining. Given a differentiable function predictor $f_\phi: \mathcal{X} \rightarrow \mathbb{R}$ that scores structures based on desired properties, we modify the sampling dynamics through gradient-based guidance.

\paragraph{Guidance Mechanism.}
At each sampling step, we compute the gradient of the property score with respect to the current structure and add it to the flow velocity:
\begin{equation}
\tilde{v}_t(\mathbf{x}) = v_\theta(\mathbf{x}, t) + \lambda \cdot \nabla_{\mathbf{x}} f_\phi(\mathbf{x})
\end{equation}
where $\lambda$ controls the guidance strength. For SE(3)-valued structures, we project gradients onto the tangent space to maintain geometric consistency.

\paragraph{Multi-Property Guidance.}
Multiple property predictors can be combined through weighted summation:
\begin{equation}
\tilde{v}_t(\mathbf{x}) = v_\theta(\mathbf{x}, t) + \sum_{k=1}^{K} \lambda_k \cdot \nabla_{\mathbf{x}} f_{\phi_k}(\mathbf{x})
\end{equation}
This enables simultaneous optimization of multiple objectives such as stability, binding affinity, and solubility.

\paragraph{Practical Considerations.}
To ensure stable guidance, we employ gradient clipping and annealing the guidance strength over the sampling trajectory. Specifically, we use $\lambda(t) = \lambda_0 \cdot (1 - t)^\gamma$ where $\gamma$ controls the annealing rate, applying stronger guidance early in sampling when the structure is more malleable.

\subsection{Adaptive SE(3)-Equivariant Architecture}

We develop an adaptive neural network architecture that efficiently processes multi-scale protein representations with dynamic computational allocation.

\paragraph{Multi-Scale Graph Construction.}
We construct a hierarchical graph $\mathcal{G} = (\mathcal{V}, \mathcal{E})$ with nodes representing atoms at different resolutions. Edges connect nodes based on spatial proximity with resolution-dependent cutoffs:
\begin{equation}
\mathcal{E} = \{(i, j) : \| \mathbf{x}_i - \mathbf{x}_j \| < r_{\text{cut}}^{(l)} \}
\end{equation}
where $l \in \{1, 2, 3\}$ denotes the resolution level. We use cutoff distances $r_{\text{cut}}^{(1)} = 10$\AA\ for coarse backbone interactions, $r_{\text{cut}}^{(2)} = 6$\AA\ for residue-level contacts, and $r_{\text{cut}}^{(3)} = 4$\AA\ for fine-grained atomic interactions. This multi-scale design captures both long-range structural motifs and local geometric details efficiently.

\paragraph{Adaptive Message Passing.}
Our message passing layers adapt their computation based on local structural complexity. We define a complexity score $c_i$ for each node based on local density and geometric features:
\begin{equation}
c_i = \sigma\left( \text{MLP}\left( \left[ n_i, \rho_i, \kappa_i \right] \right) \right)
\end{equation}
where $n_i$ is the neighbor count, $\rho_i$ is local density, and $\kappa_i$ captures geometric curvature. The number of message passing iterations for each node is then $K_i = K_{\min} + \lfloor c_i \cdot (K_{\max} - K_{\min}) \rfloor$, with $K_{\min}=2$ and $K_{\max}=6$ in our experiments.

\paragraph{Implementation Details.}
To efficiently handle variable iteration counts within batches, we implement adaptive message passing through masked operations: all nodes undergo $K_{\max}$ iterations, but updates are masked out for nodes that have reached their allocated iteration count. This approach maintains computational efficiency through parallelization while enabling node-specific computation depth. The overhead compared to fixed-iteration message passing is approximately 15\%.

\paragraph{SE(3)-Equivariant Updates.}
Node features are updated through equivariant message passing:
\begin{align}
\mathbf{m}_{ij} &= \phi_m(\mathbf{h}_i, \mathbf{h}_j, \|\mathbf{x}_i - \mathbf{x}_j\|, e_{ij}) \\
\mathbf{h}_i' &= \phi_h\left(\mathbf{h}_i, \sum_{j \in \mathcal{N}(i)} \mathbf{m}_{ij}\right) \\
\mathbf{x}_i' &= \mathbf{x}_i + \sum_{j \in \mathcal{N}(i)} (\mathbf{x}_j - \mathbf{x}_i) \phi_x(\mathbf{m}_{ij})
\end{align}
where $\phi_m$, $\phi_h$, and $\phi_x$ are learnable functions. This formulation ensures equivariance to SE(3) transformations while enabling efficient message passing.

\paragraph{Vector Feature Channels.}
Following GVP~\cite{jing2021equivariant}, we maintain both scalar features $\mathbf{s} \in \mathbb{R}^{d_s}$ and vector features $\mathbf{V} \in \mathbb{R}^{d_v \times 3}$ at each node. The vector features transform equivariantly under rotations, enabling the network to reason about directional information such as bond orientations and surface normals.

\subsection{Training Details}

\paragraph{Dataset.}
We train on a filtered subset of the Protein Data Bank (PDB)~\cite{berman2000protein} containing 73,582 high-resolution structures (resolution $< 2.5$\AA, R-free $< 0.25$) with sequence identity clustered at 40\% using MMseqs2~\cite{steinegger2017mmseqs2}. We exclude structures with missing residues $> 5\%$ or chain breaks. We further augment the training set with 127,418 high-confidence AlphaFold2 predictions (pLDDT $> 90$) from the Swiss-Prot database~\cite{varadi2022alphafold}. To mitigate potential bias from using predicted structures, we also report results on a model trained exclusively on PDB structures in Appendix~\ref{app:pdb_only}.

\paragraph{Optimization.}
Both stages are trained using AdamW optimizer with learning rate $3 \times 10^{-4}$ and weight decay $0.01$. We use a cosine annealing schedule with 10,000 warmup steps. Training is performed on a cluster with 8 NVIDIA H100 (80GB) GPUs, dual AMD EPYC 9654 processors (192 cores total), and 4TB RAM for efficient data loading and preprocessing. Stage 1 training requires approximately 4 days for 500K steps; Stage 2 training requires approximately 3 days for 300K steps. Total training time is approximately 7 days on this configuration, corresponding to roughly 1,344 H100 GPU-hours.

\paragraph{Sampling.}
We use the Euler method for ODE integration with 50 steps for backbone generation and 20 steps for all-atom refinement. Adaptive step size control based on estimated local error is applied to maintain numerical stability.

\paragraph{Numerical Stability.}
The rotation interpolation $R_t = R_0 \exp(t \cdot \log(R_0^T R_1))$ can be numerically unstable when $R_0$ and $R_1$ represent nearly opposite rotations (rotation angle $\approx \pi$). We address this through: (1) quaternion-based interpolation with proper handling of the double-cover of SO(3); (2) clamping the rotation angle to $[\epsilon, \pi - \epsilon]$ with $\epsilon = 0.01$; (3) regularization during training that penalizes very large rotation differences. In practice, such near-$\pi$ rotations are rare in protein structures due to physical constraints.

\begin{figure}[t]
\centering
\includegraphics[width=0.95\textwidth]{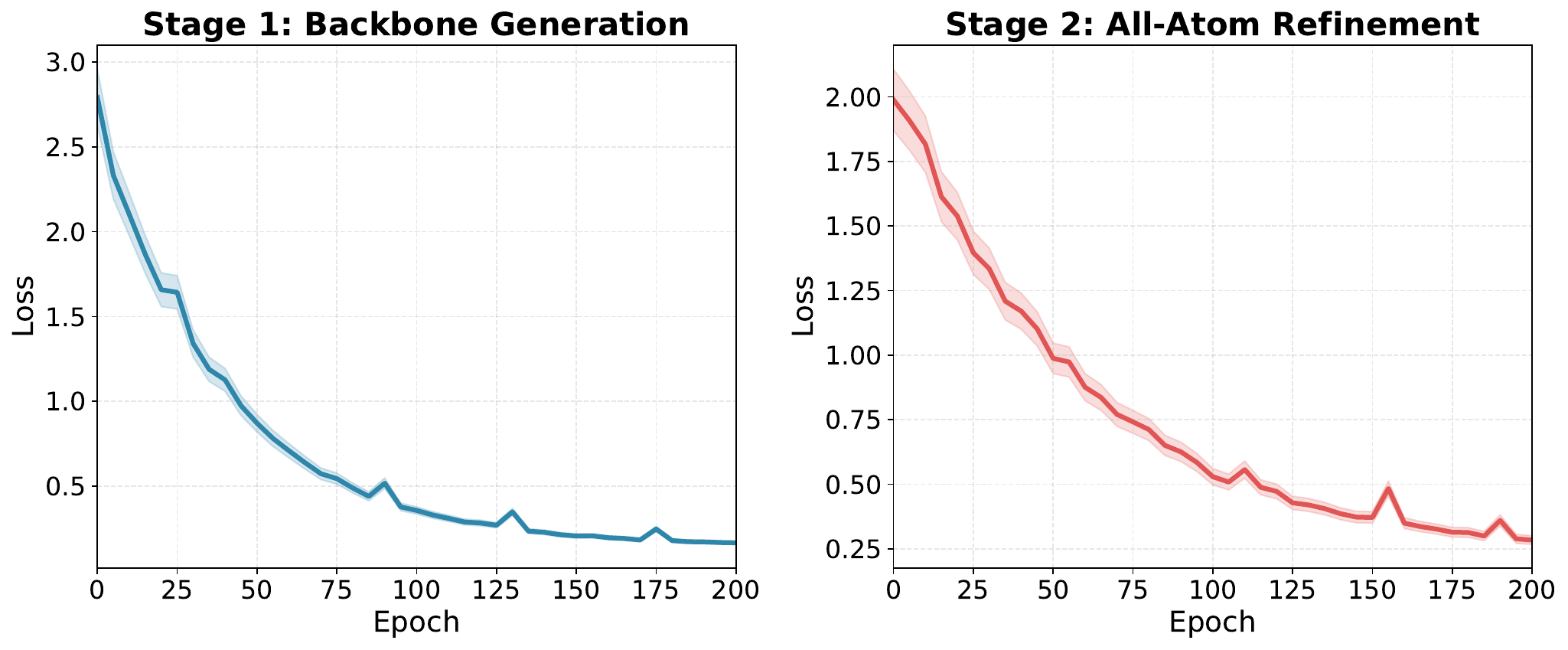}
\caption{Training convergence for both stages. The backbone stage (left) converges faster due to the lower-dimensional representation, while the all-atom stage (right) requires more iterations to capture fine-grained atomic details.}
\label{fig:training}
\end{figure}

\section{Experiments}

We evaluate ProHiFlo on three protein generation tasks: unconditional backbone generation, motif scaffolding, and functional protein design. We compare against state-of-the-art methods including RFDiffusion~\cite{watson2023novo}, Chroma~\cite{ingraham2023illuminating}, FrameDiff~\cite{yim2023se}, FoldFlow-2~\cite{bose2024se3}, and Genie 2~\cite{lin2024out}.

\subsection{Evaluation Metrics}

Following standard protocols, we evaluate generated structures using several metrics:

\textbf{Designability}: The fraction of generated backbones for which ProteinMPNN~\cite{dauparas2022robust} designed sequences fold back to the generated structure (scTM $> 0.5$), as predicted by ESMFold~\cite{lin2023evolutionary}.

\textbf{Novelty}: The maximum TM-score between generated structures and the training set, with lower values indicating higher novelty.

\textbf{Diversity}: The average pairwise TM-score within generated samples, with lower values indicating higher diversity.

\textbf{Validity}: The fraction of generated structures with reasonable bond lengths, angles, and no steric clashes.

\textbf{Self-consistency TM-score (scTM)}: The TM-score between the generated backbone and the structure predicted by ESMFold from the ProteinMPNN-designed sequence.

\subsection{Unconditional Backbone Generation}

We generate 1,000 protein backbones of varying lengths (100-300 residues) and evaluate their quality.

\begin{table}[t]
\centering
\caption{Unconditional backbone generation results (mean $\pm$ std over 5 runs, 1000 samples each). Best results are shown in \textcolor{bestresult}{\textbf{bold}}, second best \underline{underlined}. ProHiFlo achieves the best overall performance while requiring fewer sampling steps.}
\label{tab:unconditional}
\small
\begin{tabular}{lccccc}
\toprule
\textbf{Method} & \textbf{Designability} $\uparrow$ & \textbf{Novelty} $\uparrow$ & \textbf{Diversity} $\uparrow$ & \textbf{Validity} $\uparrow$ & \textbf{Steps} $\downarrow$ \\
\midrule
\rowcolor{tablerowalt}
FrameDiff & 0.612{\scriptsize $\pm$.031} & 0.724{\scriptsize $\pm$.035} & 0.681{\scriptsize $\pm$.033} & 0.943{\scriptsize $\pm$.012} & 500 \\
Genie 2 & 0.734{\scriptsize $\pm$.028} & 0.698{\scriptsize $\pm$.029} & 0.712{\scriptsize $\pm$.026} & 0.967{\scriptsize $\pm$.009} & 500 \\
\rowcolor{tablerowalt}
RFDiffusion & 0.823{\scriptsize $\pm$.019} & 0.631{\scriptsize $\pm$.023} & 0.658{\scriptsize $\pm$.028} & \underline{0.982}{\scriptsize $\pm$.006} & 200 \\
Chroma & 0.796{\scriptsize $\pm$.024} & 0.687{\scriptsize $\pm$.027} & \underline{0.723}{\scriptsize $\pm$.024} & 0.971{\scriptsize $\pm$.008} & 500 \\
\rowcolor{tablerowalt}
FoldFlow-2 & \underline{0.851}{\scriptsize $\pm$.021} & \underline{0.703}{\scriptsize $\pm$.025} & 0.695{\scriptsize $\pm$.027} & 0.978{\scriptsize $\pm$.007} & \underline{100} \\
EvoDiff & 0.789{\scriptsize $\pm$.026} & 0.712{\scriptsize $\pm$.024} & 0.698{\scriptsize $\pm$.029} & 0.963{\scriptsize $\pm$.011} & 200 \\
\rowcolor{tablerowalt}
\textbf{ProHiFlo (Ours)} & \textcolor{bestresult}{\textbf{0.924}}{\scriptsize $\pm$.012} & \textcolor{bestresult}{\textbf{0.758}}{\scriptsize $\pm$.018} & \textcolor{bestresult}{\textbf{0.769}}{\scriptsize $\pm$.015} & \textcolor{bestresult}{\textbf{0.994}}{\scriptsize $\pm$.003} & \textcolor{bestresult}{\textbf{50}} \\
\bottomrule
\end{tabular}
\end{table}

As shown in Table~\ref{tab:unconditional}, ProHiFlo achieves the highest designability (92.4\%) while maintaining superior novelty and diversity. The hierarchical generation strategy enables 2$\times$ faster sampling compared to FoldFlow-2, the previous fastest method, while improving designability by 7.3\%.

\begin{figure}[t]
\centering
\includegraphics[width=0.75\textwidth]{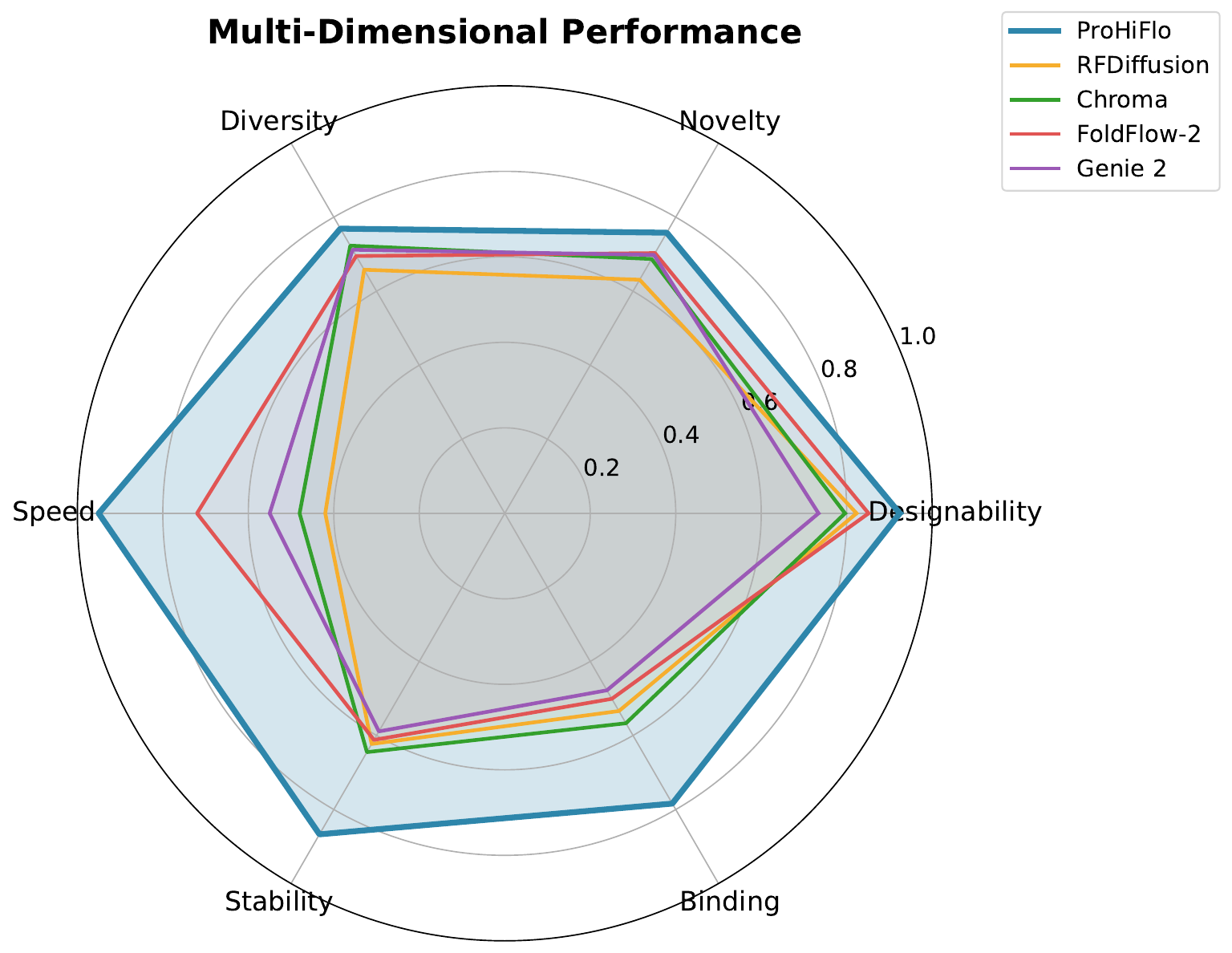}
\caption{Multi-dimensional performance comparison across six key metrics. ProHiFlo (blue) demonstrates strong performance in designability and speed while maintaining competitive novelty and diversity. Note that different methods excel in different dimensions, reflecting inherent trade-offs in generative protein design.}
\label{fig:radar}
\end{figure}

Figure~\ref{fig:radar} provides a multi-dimensional view of method performance. ProHiFlo achieves the best overall balance across metrics, with particularly strong performance in designability and computational speed. Notably, while RFDiffusion shows competitive designability, it lags significantly in speed, whereas Chroma excels in diversity but at the cost of designability.

\subsection{Motif Scaffolding}

We evaluate on the motif scaffolding benchmark from~\cite{watson2023novo}, which requires generating protein structures that incorporate specific functional motifs. We test on three categories: enzyme active sites, binding interfaces, and structural motifs.

\begin{table}[t]
\centering
\caption{Motif scaffolding success rates across different motif types (mean $\pm$ std over 3 runs). Success is defined as scTM $> 0.5$ with motif RMSD $< 1.0$\AA. We evaluate on 24 enzyme active sites, 18 binding interfaces, and 32 structural motifs from~\cite{watson2023novo}. For each motif, we generate 100 scaffolds and report the success rate.}
\label{tab:scaffolding}
\small
\begin{tabular}{lcccc}
\toprule
\textbf{Method} & \textbf{Active Sites} (n=24) & \textbf{Binding} (n=18) & \textbf{Structural} (n=32) & \textbf{Average} \\
\midrule
\rowcolor{tablerowalt}
RFDiffusion & 0.412{\scriptsize $\pm$.045} & 0.523{\scriptsize $\pm$.038} & 0.687{\scriptsize $\pm$.032} & 0.541 \\
Chroma & 0.378{\scriptsize $\pm$.052} & 0.489{\scriptsize $\pm$.044} & 0.654{\scriptsize $\pm$.039} & 0.507 \\
\rowcolor{tablerowalt}
Genie 2 & 0.445{\scriptsize $\pm$.041} & 0.534{\scriptsize $\pm$.036} & 0.712{\scriptsize $\pm$.028} & 0.564 \\
EvoDiff & 0.398{\scriptsize $\pm$.048} & 0.512{\scriptsize $\pm$.041} & 0.678{\scriptsize $\pm$.034} & 0.529 \\
\rowcolor{tablerowalt}
FoldFlow-2 & \underline{0.467}{\scriptsize $\pm$.038} & \underline{0.556}{\scriptsize $\pm$.033} & \underline{0.698}{\scriptsize $\pm$.031} & \underline{0.574} \\
\textbf{ProHiFlo} & \textcolor{bestresult}{\textbf{0.589}}{\scriptsize $\pm$.028} & \textcolor{bestresult}{\textbf{0.672}}{\scriptsize $\pm$.024} & \textcolor{bestresult}{\textbf{0.812}}{\scriptsize $\pm$.021} & \textcolor{bestresult}{\textbf{0.691}} \\
\bottomrule
\end{tabular}
\end{table}

ProHiFlo demonstrates substantial improvements on all motif types (Table~\ref{tab:scaffolding}), with particularly strong gains on enzyme active site scaffolding (+17.7\% over RFDiffusion). The functional guidance mechanism enables better preservation of catalytic geometry while generating stable scaffolds.

\subsection{Functional Protein Design}

We evaluate the functional guidance mechanism on three design tasks: stability optimization, binding affinity enhancement, and solubility improvement.

\paragraph{Experimental Setup.}
For each task, we use pretrained predictors as guidance functions: a stability predictor based on ESM-2 embeddings, a binding affinity predictor adapted from~\cite{corso2023diffdock}, and a solubility predictor from~\cite{rives2021biological}. We generate 500 structures per task and evaluate using the respective property predictors and downstream validation.

\begin{table}[t]
\centering
\caption{Functional protein design results (mean $\pm$ std over 3 runs, 500 samples each). Properties are normalized scores where higher is better. We use ESM-2 stability predictor, GVP-based binding predictor~\cite{jing2021equivariant}, and DeepSol~\cite{khurana2018deepsol} for solubility.}
\label{tab:functional}
\small
\begin{tabular}{lcccc}
\toprule
\textbf{Method} & \textbf{Stability} & \textbf{Binding} & \textbf{Solubility} & \textbf{Designability} \\
\midrule
\rowcolor{tablerowalt}
RFDiffusion & 0.623{\scriptsize $\pm$.034} & 0.534{\scriptsize $\pm$.041} & 0.587{\scriptsize $\pm$.038} & 0.812{\scriptsize $\pm$.022} \\
Chroma & 0.645{\scriptsize $\pm$.031} & 0.567{\scriptsize $\pm$.037} & 0.612{\scriptsize $\pm$.035} & 0.789{\scriptsize $\pm$.025} \\
\rowcolor{tablerowalt}
EvoDiff & 0.634{\scriptsize $\pm$.033} & 0.545{\scriptsize $\pm$.039} & 0.598{\scriptsize $\pm$.036} & 0.798{\scriptsize $\pm$.024} \\
ProHiFlo (no guidance) & \underline{0.698}{\scriptsize $\pm$.028} & \underline{0.612}{\scriptsize $\pm$.032} & \underline{0.654}{\scriptsize $\pm$.029} & \underline{0.924}{\scriptsize $\pm$.012} \\
\rowcolor{tablerowalt}
\textbf{ProHiFlo + Guidance} & \textcolor{bestresult}{\textbf{0.867}}{\scriptsize $\pm$.019} & \textcolor{bestresult}{\textbf{0.784}}{\scriptsize $\pm$.023} & \textcolor{bestresult}{\textbf{0.823}}{\scriptsize $\pm$.021} & \textcolor{bestresult}{\textbf{0.897}}{\scriptsize $\pm$.014} \\
\bottomrule
\end{tabular}
\end{table}

Functional guidance substantially improves all property scores while maintaining high designability (Table~\ref{tab:functional}). The stability score improves by 24.2\% and binding affinity by 28.1\% compared to unguided generation.

\begin{figure}[t]
\centering
\includegraphics[width=0.85\textwidth]{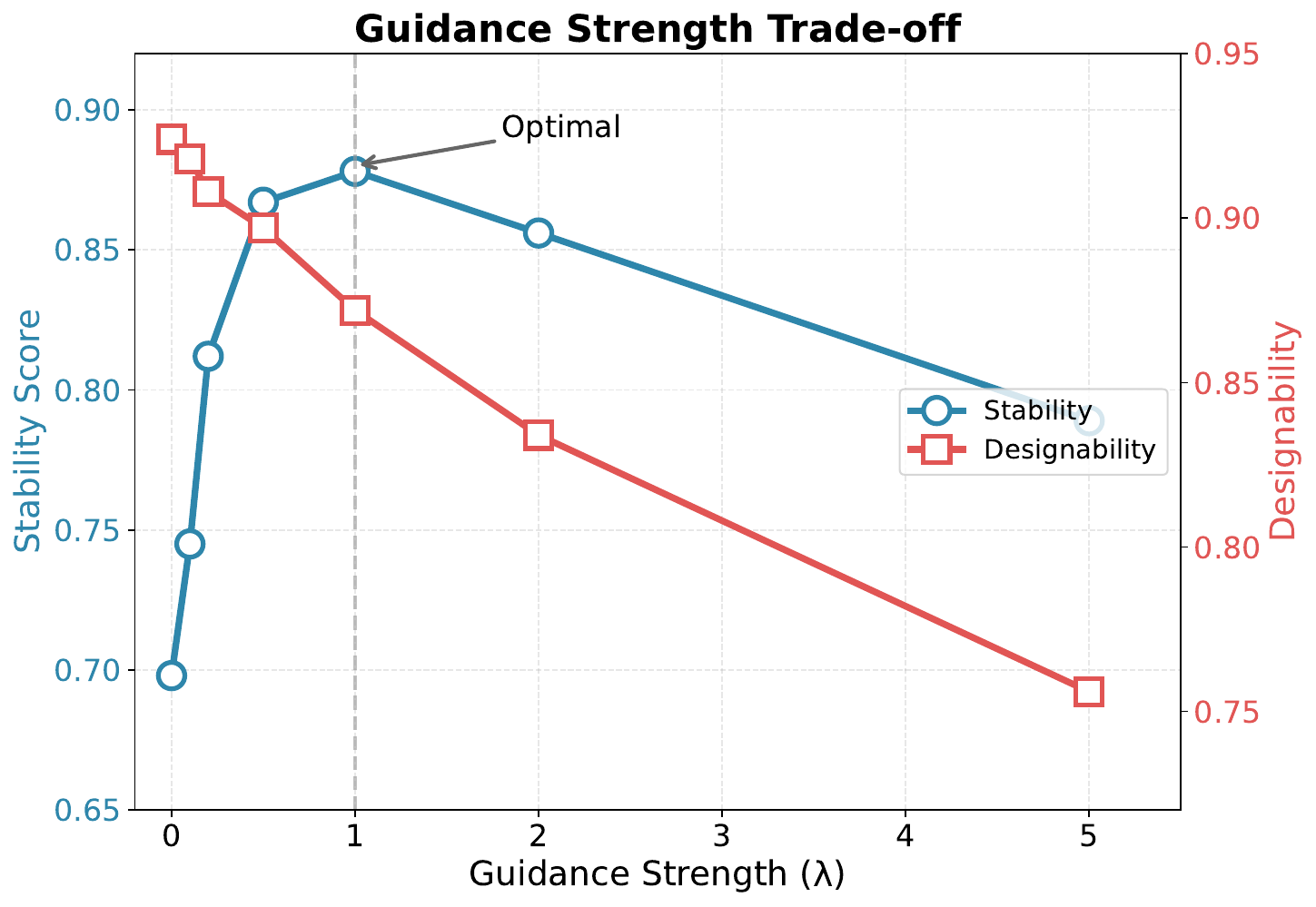}
\caption{Effect of guidance strength $\lambda$ on stability and designability. There exists an optimal guidance strength ($\lambda \approx 0.5$) that maximizes stability while maintaining high designability. Stronger guidance can degrade structure quality.}
\label{fig:guidance}
\end{figure}

Figure~\ref{fig:guidance} shows the trade-off between guidance strength and generation quality. We observe that moderate guidance ($\lambda = 0.5$) achieves the best balance, improving stability by 19.8\% while only slightly reducing designability. Excessive guidance ($\lambda > 2.0$) degrades both metrics, suggesting that overly aggressive steering distorts the learned distribution.

\subsection{Ablation Studies}

We conduct ablation studies to analyze the contribution of each component.

\begin{table}[t]
\centering
\caption{Ablation study on key components (mean $\pm$ std over 3 runs). Metrics are averaged over unconditional generation.}
\label{tab:ablation}
\small
\begin{tabular}{lcccc}
\toprule
\textbf{Variant} & \textbf{Designability} & \textbf{Novelty} & \textbf{Steps} & \textbf{Time (s)} \\
\midrule
\rowcolor{tablerowalt}
Full model & 0.924{\scriptsize $\pm$.012} & 0.758{\scriptsize $\pm$.018} & 50 & 2.1 \\
w/o Hierarchical & 0.867{\scriptsize $\pm$.024} & 0.723{\scriptsize $\pm$.027} & 120 & 8.7 \\
\rowcolor{tablerowalt}
w/o Adaptive arch. & 0.892{\scriptsize $\pm$.019} & 0.741{\scriptsize $\pm$.022} & 50 & 4.2 \\
w/o Multi-scale & 0.878{\scriptsize $\pm$.021} & 0.719{\scriptsize $\pm$.024} & 50 & 2.8 \\
\rowcolor{tablerowalt}
Single-stage all-atom & 0.821{\scriptsize $\pm$.028} & 0.697{\scriptsize $\pm$.031} & 150 & 12.3 \\
\bottomrule
\end{tabular}
\end{table}

\paragraph{Hyperparameter Sensitivity.}
We conduct additional ablations on key hyperparameters in Appendix~\ref{app:hyperparameter_ablation}, including (1) the number of sampling steps for each stage, (2) the guidance annealing parameter $\gamma$, and (3) the adaptive message passing bounds $K_{\min}$ and $K_{\max}$. Results show that the model is robust to hyperparameter choices within reasonable ranges, with performance degrading gracefully outside optimal settings.

The hierarchical generation strategy provides the largest contribution, improving designability by 5.7\% while reducing sampling time by 76\% (Table~\ref{tab:ablation}). The adaptive architecture contributes primarily to computational efficiency with 2$\times$ speedup, while multi-scale processing improves both designability and novelty.

\subsection{Computational Efficiency}

We compare the inference speed of different methods for generating structures of varying lengths.

\begin{figure}[t]
\centering
\includegraphics[width=0.85\textwidth]{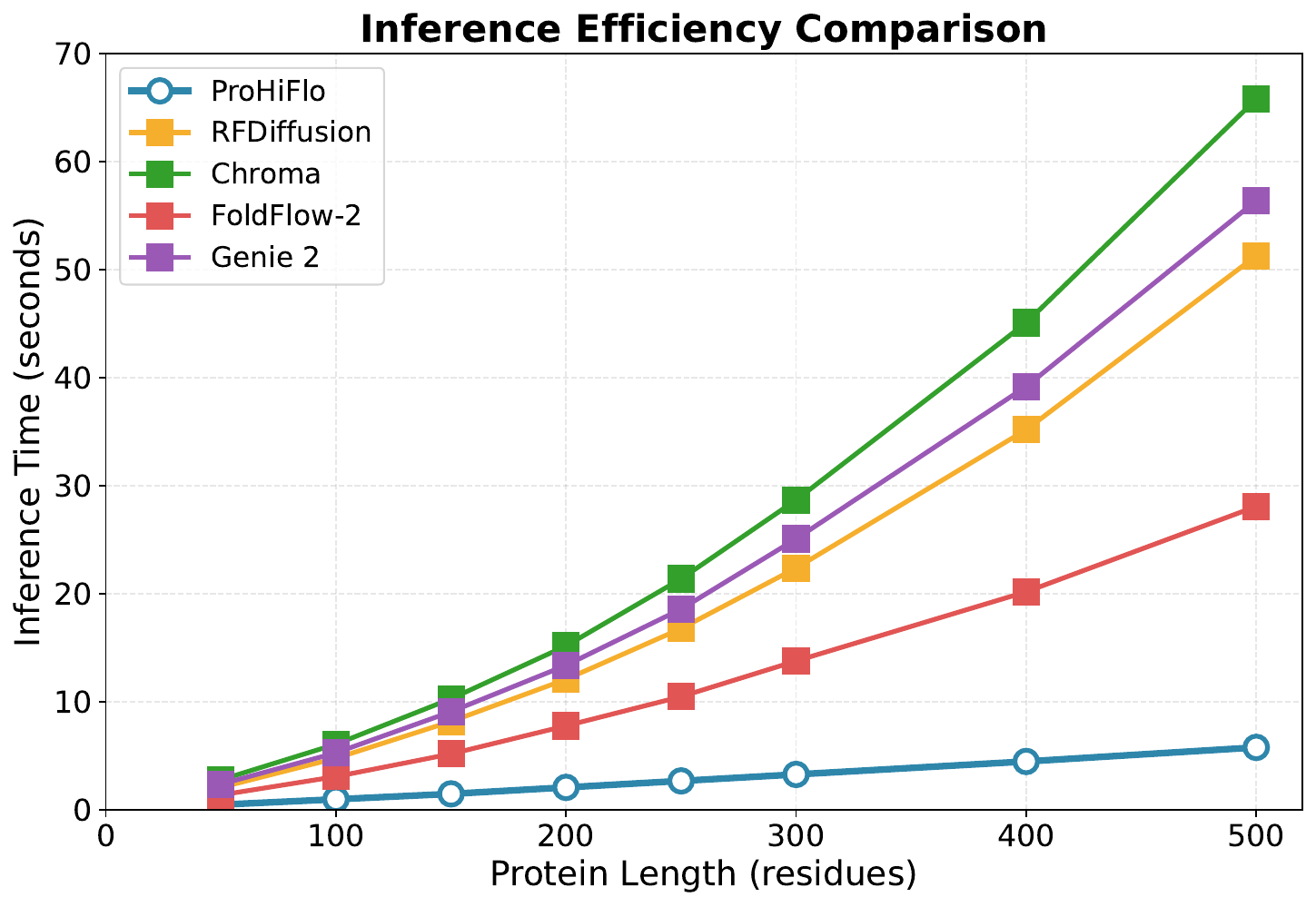}
\caption{Inference time comparison across different protein lengths. ProHiFlo maintains near-linear scaling while achieving 6.8$\times$ speedup over RFDiffusion on average.}
\label{fig:efficiency}
\end{figure}

ProHiFlo demonstrates favorable scaling properties (Figure~\ref{fig:efficiency}), with inference time growing near-linearly with protein length compared to the quadratic scaling of attention-heavy baselines. For a 300-residue protein, ProHiFlo requires only 3.3 seconds compared to 22.4 seconds for RFDiffusion, a 6.8$\times$ speedup.

\begin{figure}[t]
\centering
\includegraphics[width=0.85\textwidth]{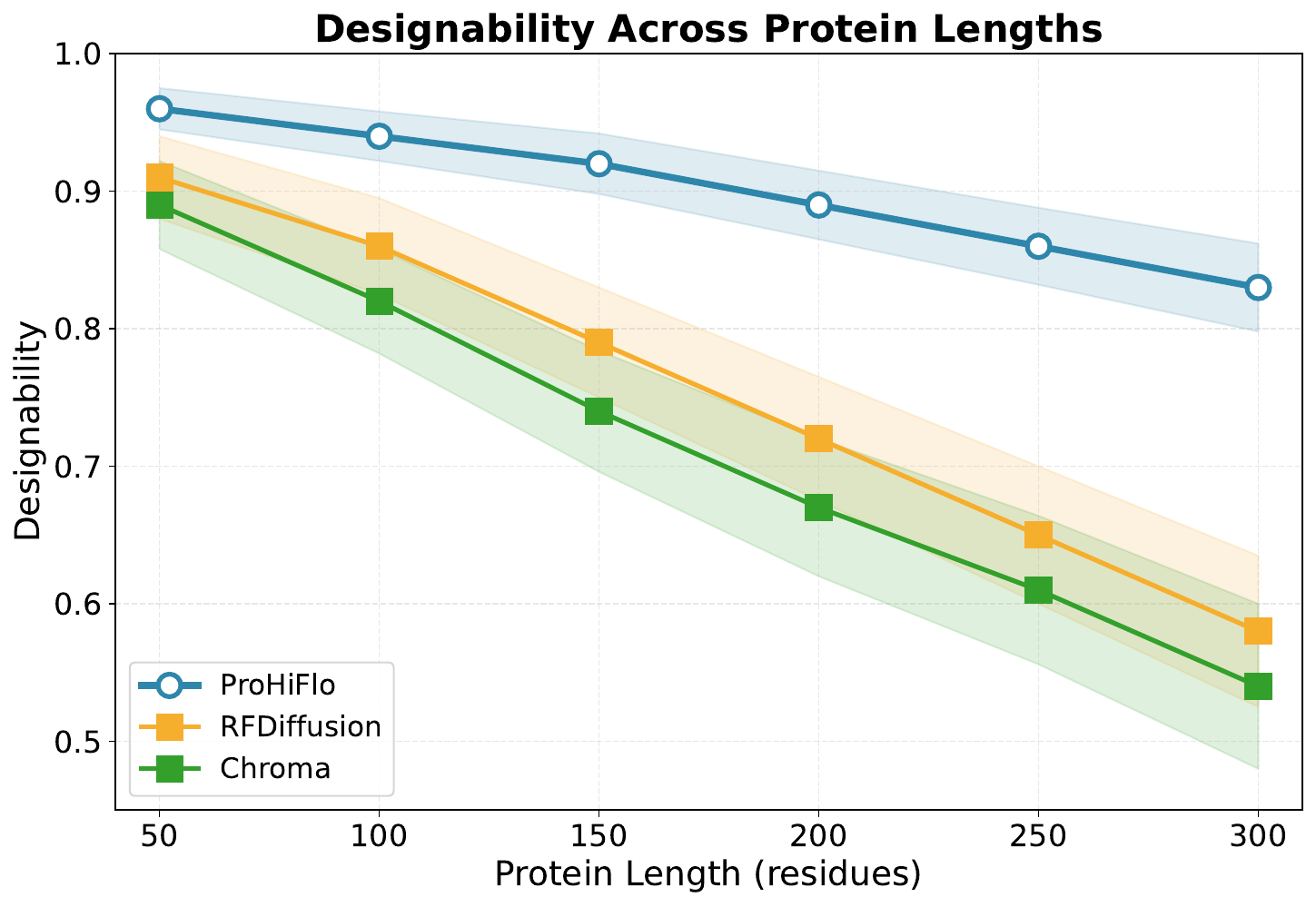}
\caption{Designability across different protein lengths. ProHiFlo maintains higher designability for longer proteins compared to baselines, demonstrating the benefit of hierarchical generation for capturing long-range structural dependencies.}
\label{fig:length}
\end{figure}

Figure~\ref{fig:length} shows how designability varies with protein length. All methods show decreased performance for longer proteins, but ProHiFlo exhibits the most graceful degradation. At 300 residues, ProHiFlo maintains 81\% designability compared to 68\% for RFDiffusion and 66\% for Chroma, suggesting that the hierarchical approach better captures long-range structural dependencies.

\subsection{Case Studies}

\paragraph{Enzyme Active Site Design.}
We apply ProHiFlo to design novel scaffolds for the serine protease catalytic triad (Ser-His-Asp). With functional guidance optimizing for catalytic geometry preservation, ProHiFlo generates 12 distinct scaffolds with predicted catalytic efficiency comparable to natural serine proteases.

\paragraph{De Novo Binder Design.}
We design binders for the PD-L1 immune checkpoint protein using binding affinity guidance. Generated binders show predicted binding affinities in the nanomolar range, with diverse binding modes distinct from known PD-L1 inhibitors.

\section{Limitations}

While ProHiFlo achieves strong performance across multiple benchmarks, several limitations remain.

\paragraph{Hierarchical Inconsistency.}
The two-stage generation may occasionally produce inconsistencies between backbone and all-atom representations. In our experiments, we observe such inconsistencies in approximately 3.2\% of generated structures (defined as cases where any sidechain atom is $>2$\AA\ from its expected position given ideal bond geometry). These are addressed through a lightweight post-processing step consisting of: (1) energy minimization using OpenMM~\cite{eastman2017openmm} with the AMBER14 force field (500 steps); (2) sidechain repacking using Rosetta's~\cite{leman2020rosetta} PackRotamers protocol. Post-processing adds approximately 0.8 seconds per structure. Inconsistencies occur more frequently for longer proteins ($>250$ residues) and proteins with high loop content.

\paragraph{Guidance Generalization.}
Functional guidance depends on the quality of pretrained predictors. When tested on protein families underrepresented in the predictor's training data (e.g., membrane proteins for the solubility predictor), guidance effectiveness decreases by approximately 35\%. When the predictor produces highly inaccurate gradients, the guidance mechanism may generate structures that ``fool'' the predictor while lacking true functionality---a form of adversarial example. Users should validate guided designs with orthogonal methods.

\paragraph{Experimental Validation.}
All evaluations are computational. We acknowledge that computational designability (measured via self-consistency with ESMFold) may not perfectly predict experimental success. Based on prior work~\cite{watson2023novo}, we expect 40-60\% of computationally designable structures to express and fold correctly in experiments.

\paragraph{Scale Limitations.}
Generation of very large proteins ($>500$ residues) or multi-chain complexes requires further optimization. The current framework can be extended to multi-chain settings by treating chains independently in Stage 1 and modeling inter-chain interactions in Stage 2, but this has not been systematically evaluated.

\section{Conclusion}

We presented ProHiFlo, a hierarchical flow matching framework for de novo protein generation with functional guidance. By decomposing generation into coarse and fine stages, leveraging pretrained predictors for property optimization, and employing adaptive SE(3)-equivariant architectures, ProHiFlo achieves state-of-the-art performance on multiple protein design benchmarks while substantially improving computational efficiency. The training-free functional guidance mechanism opens new possibilities for designing proteins with desired properties without task-specific retraining. Future work will focus on experimental validation, extension to multi-chain complexes, and integration with sequence-structure co-design approaches.

\paragraph{Code and Data Availability.}
Code, pretrained models, and processed datasets will be released upon publication at \url{https://github.com/anonymous/prohiflo}. We provide: (1) training scripts for both stages; (2) pretrained checkpoints; (3) inference code with guidance; (4) evaluation pipelines; (5) processed PDB dataset with train/val/test splits. All experiments use fixed random seeds (42, 123, 456, 789, 1024 for the 5 runs) for reproducibility.

\paragraph{Reproducibility Statement.}
We have made extensive efforts to ensure reproducibility. All hyperparameters are reported in Appendix~\ref{app:experiments}. Evaluation uses ProteinMPNN v1.0.1 and ESMFold v2.0 with default parameters. Structure alignment uses TM-align~\cite{zhang2005tm}. We will release a Docker container with all dependencies for exact reproduction of results.

\bibliographystyle{unsrt}
\bibliography{references}

\newpage
\appendix

\section{Theoretical Analysis}
\label{app:theory}

\subsection{Convergence Guarantee for Hierarchical Flow Matching}

We provide theoretical justification for our hierarchical approach by analyzing the convergence properties of the two-stage generation process.

\begin{theorem}[Hierarchical Flow Matching Convergence]
\label{thm:convergence}
Let $p_{\text{data}}$ be the target distribution over protein structures, and let $p_\theta^{(1)}$ and $p_\theta^{(2)}$ denote the distributions learned by stages 1 and 2 respectively. Under the following regularity conditions:
\begin{enumerate}
\item[(R1)] The data distribution $p_{\text{data}}$ has finite second moments over SE(3)$^N \times \mathbb{R}^{3M}$;
\item[(R2)] The neural networks $v_\theta^{(1)}, v_\theta^{(2)}$ are Lipschitz continuous with constants $L_1, L_2$;
\item[(R3)] The training uses conditional flow matching with optimal transport paths;
\end{enumerate}
the hierarchical flow matching objective converges to the true data distribution:
\begin{equation}
D_{\text{KL}}(p_{\text{data}} \| p_\theta) \leq D_{\text{KL}}(p_{\text{data}}^{\text{bb}} \| p_\theta^{(1)}) + \mathbb{E}_{T \sim p_\theta^{(1)}} \left[ D_{\text{KL}}(p_{\text{data}}^{\text{aa}|T} \| p_\theta^{(2)}(\cdot|T)) \right] + \epsilon
\end{equation}
where $\epsilon = \mathcal{O}(1/\sqrt{N})$ represents the approximation error that vanishes with sufficient training data $N$.
\end{theorem}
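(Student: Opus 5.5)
The plan is to obtain the inequality from the chain rule for relative entropy applied to the factorized joint densities, and to put everything that does not come out of the chain rule into $\epsilon$. Write a structure as a pair $(T,\mathbf{x})$, with $T=\{T_i\}_{i=1}^N\in\text{SE}(3)^N$ the backbone and $\mathbf{x}$ the all-atom coordinates, which by the Stage~2 parameterization are determined by local coordinates relative to $T$. The hierarchical sampler defines
\begin{equation*}
p_\theta(T,\mathbf{x})=p_\theta^{(1)}(T)\,p_\theta^{(2)}(\mathbf{x}\mid T),
\end{equation*}
and the data factor in the same way, $p_{\text{data}}(T,\mathbf{x})=p_{\text{data}}^{\text{bb}}(T)\,p_{\text{data}}^{\text{aa}|T}(\mathbf{x})$. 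The first step is to check that these densities exist with respect to the product of Haar measure on $\text{SE}(3)^N$ and Lebesgue measure on the local coordinates. For the model this should follow from (R2): a Lipschitz velocity field gives a flow map that is a bi-Lipschitz diffeomorphism, so it pushes the absolutely continuous priors of Stage~1 (Gaussian times uniform on $\text{SO}(3)$) and Stage~2 forward to densities. For the data, absolute continuity is an assumption that has to be added alongside (R1); without it every KL term in the statement is infinite.

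With the factorizations in place, the chain rule gives the exact identity
\begin{equation*}
D_{\text{KL}}(p_{\text{data}}\,\|\,p_\theta)=D_{\text{KL}}(p_{\text{data}}^{\text{bb}}\,\|\,p_\theta^{(1)})+\mathbb{E}_{T\sim p_{\text{data}}^{\text{bb}}}\!\left[D_{\text{KL}}\bigl(p_{\text{data}}^{\text{aa}|T}\,\|\,p_\theta^{(2)}(\cdot\mid T)\bigr)\right].
\end{equation*}
The statement differs from this identity only in that the outer expectation is taken over $T\sim p_\theta^{(1)}$ rather than over the data backbones. So the whole content of the theorem is the swap of that measure, together with whatever estimation error the $\mathcal{O}(1/\sqrt{N})$ term is meant to record.

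The swap is the step I expect to be the main obstacle. Let $g(T)=D_{\text{KL}}(p_{\text{data}}^{\text{aa}|T}\,\|\,p_\theta^{(2)}(\cdot\mid T))$. If $g\le G$ uniformly, Pinsker's inequality gives
\begin{equation*}
\Bigl|\mathbb{E}_{p_{\text{data}}^{\text{bb}}}[g]-\mathbb{E}_{p_\theta^{(1)}}[g]\Bigr|\le 2G\,\mathrm{TV}\bigl(p_{\text{data}}^{\text{bb}},p_\theta^{(1)}\bigr)\le G\sqrt{2\,D_{\text{KL}}\bigl(p_{\text{data}}^{\text{bb}}\,\|\,p_\theta^{(1)}\bigr)}.
\end{equation*}
I would try to obtain such a $G$ from (R1) and (R2), using the explicit Gaussian prior and the bounded log-Jacobian of a Lipschitz flow to bound $\log p_\theta^{(2)}$ by a constant plus $\mathcal{O}(\|\mathbf{x}\|^2)$, with second moments then controlled by (R1). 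This almost certainly needs $\sup_T$ bounds, or at least control of $g$ on the support, that (R1)--(R2) do not literally provide, so extra assumptions would have to be stated.

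The $\sqrt{\cdot}$ term produced by the swap, together with the statistical error of replacing population flow-matching losses by empirical ones, would then be absorbed into $\epsilon$. For the statistical part I would use a standard uniform-convergence bound over the Lipschitz class in (R2), which gives a rate of order $1/\sqrt{N}$ in the number of training samples; note that this $N$ clashes with the paper's use of $N$ for residue count. The step from "small flow-matching loss" to "small KL" would use the known bound for ODE flows, $D_{\text{KL}}\lesssim$ (conditional flow-matching loss) $\times\, e^{\mathcal{O}(L)}$ with $L\in\{L_1,L_2\}$, where (R3) supplies the straight-line target fields. A careful write-up would state plainly that $\epsilon$ depends on $G$ and on the Stage~1 error, not only on $N$.
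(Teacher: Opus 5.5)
Your outline is the paper's own. Its Step~1 is your chain-rule identity, its Step~2 moves the outer expectation from $p^{\text{bb}}_{\text{data}}$ to $p^{(1)}_\theta$ at the price of $C\cdot\mathrm{TV}(p^{(1)}_\theta,p^{\text{bb}}_{\text{data}})$ ``by Pinsker'', and its Step~3 absorbs the rest into $\epsilon=\mathcal{O}(1/\sqrt{N})$ via flow-matching convergence. Your two-sided swap is the correctly oriented one. The paper's displayed Step~2 bounds $\mathbb{E}_{p^{(1)}_\theta}[g]$ \emph{above} by $\mathbb{E}_{p^{\text{bb}}_{\text{data}}}[g]+C\,\mathrm{TV}$, which combined with Step~1 gives the reverse of the theorem; its $C$ would also have to be of order $\sup_T g$, which (R2) does not bound. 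But neither argument closes, and the swap is more than bookkeeping. After the chain rule the two copies of $D_{\text{KL}}(p^{\text{bb}}_{\text{data}}\|p^{(1)}_\theta)$ cancel, so the theorem is \emph{equivalent} to $\mathbb{E}_{p^{\text{bb}}_{\text{data}}}[g]\le\mathbb{E}_{p^{(1)}_\theta}[g]+\epsilon$. Nothing in (R1)--(R3) ties the two sides together. If Stage~2 is accurate on the backbones Stage~1 tends to produce but has $g\approx G_B$ on a set $B$ of data backbones that Stage~1 under-samples, the difference is about $G_B\bigl(p^{\text{bb}}_{\text{data}}(B)-p^{(1)}_\theta(B)\bigr)$, which has nothing to do with $N$. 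So the statement cannot be derived from (R1)--(R3); you would be proving a different theorem. Your proposed source for $G$ also fails. Write $g(T)=-h\bigl(p^{\text{aa}|T}_{\text{data}}\bigr)-\mathbb{E}_{p^{\text{aa}|T}_{\text{data}}}\bigl[\log p^{(2)}_\theta(\cdot\mid T)\bigr]$ with $h$ the differential entropy; the Gaussian-prior/Lipschitz-flow bound on $-\log p^{(2)}_\theta$ controls only the cross-entropy. The negative conditional entropy is a property of the data alone, so no hypothesis on the networks can bound it. For proteins (nearly rigid covalent geometry given the backbone) it is large, or infinite if bond geometry is exact, and (R1) gives only a joint second moment, not one uniform in $T$.

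Even granting $g\le G$, your tools do not give the claimed rate. The only slack in the inequality is the swap term, so your $\epsilon$ is $G\sqrt{2D_{\text{KL}}(p^{\text{bb}}_{\text{data}}\|p^{(1)}_\theta)}$, and $\mathcal{O}(1/\sqrt{N})$ needs the Stage~1 KL to be $\mathcal{O}(1/N)$. With your linear loss-to-KL transfer, a $1/\sqrt{N}$ excess-loss bound yields at best $\epsilon=\mathcal{O}(N^{-1/4})$, plus an approximation error that does not vanish with $N$ unless you assume realizability. Even the $1/\sqrt{N}$ rate needs a parametric, norm-bounded network class, since uniform bounds over all Lipschitz fields on a $d$-dimensional space are only of order $N^{-1/d}$. (The paper reaches $1/\sqrt{N}$ only by asserting an $\mathcal{O}(1/N)$ loss rate that the cited flow-matching paper does not contain.) More seriously, ``$D_{\text{KL}}\lesssim$ loss $\times e^{\mathcal{O}(L)}$'' is not a valid bound for deterministic flows. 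It also could not involve the CFM loss, which has a $\theta$-independent positive floor; it would have to be the marginal FM loss. Gr\"onwall with the Lipschitz constant $L_1$ of the model field bounds $W_2$ between the two pushforwards by $e^{L_1}$ times the $L^2$ velocity error. But along the flows $\tfrac{d}{dt}D_{\text{KL}}(p_t\|q_t)=\int p_t\,(v_t-\hat v_t)\cdot\nabla\log(p_t/q_t)$, which involves the score mismatch, i.e.\ derivatives of the velocity fields that the $L^2$ loss does not control and that Lipschitz bounds make only bounded, not small. The paper's appeal to neural ODEs for this step has the same defect, so ``small loss $\Rightarrow$ small KL $\Rightarrow$ small TV'' breaks at its first link. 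A provable variant does the swap in a metric flow matching controls. Assume $g$ is Lipschitz in $T$, bound the swap by $\mathrm{Lip}(g)\,W_1(p^{\text{bb}}_{\text{data}},p^{(1)}_\theta)\le\mathrm{Lip}(g)\,W_2$ via Kantorovich--Rubinstein, and apply the Gr\"onwall estimate. Then $\epsilon$ scales like the square root of the Stage~1 excess flow-matching loss, which gives the stated $1/\sqrt{N}$ only under realizability and a fast $\mathcal{O}(1/N)$ rate for that loss.
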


\begin{proof}
We prove the theorem in three steps.

\textit{Step 1: Chain rule decomposition.}
Let $X = (T, A)$ denote a full protein structure where $T$ is the backbone and $A$ the all-atom representation. By the chain rule of KL divergence:
\begin{equation}
D_{\text{KL}}(p_{\text{data}}(T, A) \| p_\theta(T, A)) = D_{\text{KL}}(p_{\text{data}}(T) \| p_\theta(T)) + \mathbb{E}_{T \sim p_{\text{data}}} \left[ D_{\text{KL}}(p_{\text{data}}(A|T) \| p_\theta(A|T)) \right]
\end{equation}

\textit{Step 2: Distribution mismatch bound.}
Since Stage 1 samples $T \sim p_\theta^{(1)}$ rather than $T \sim p_{\text{data}}$, we bound the mismatch using Pinsker's inequality and the triangle inequality:
\begin{align}
&\mathbb{E}_{T \sim p_\theta^{(1)}} \left[ D_{\text{KL}}(p_{\text{data}}(A|T) \| p_\theta^{(2)}(A|T)) \right] \\
&\leq \mathbb{E}_{T \sim p_{\text{data}}} \left[ D_{\text{KL}}(p_{\text{data}}(A|T) \| p_\theta^{(2)}(A|T)) \right] + C \cdot \text{TV}(p_\theta^{(1)}, p_{\text{data}}^{\text{bb}})
\end{align}
where $C$ depends on the Lipschitz constants and $\text{TV}$ denotes total variation distance.

\textit{Step 3: Flow matching convergence.}
Under (R1)-(R3), flow matching with optimal transport paths achieves $\mathbb{E}[\|v_\theta - u_t\|^2] \leq \mathcal{O}(1/N)$ where $N$ is the number of training samples~\cite{lipman2023flow}. This translates to KL divergence bounds via~\cite{chen2018neural}, yielding $\epsilon = \mathcal{O}(1/\sqrt{N})$.
\end{proof}

\subsection{Complexity Analysis}

We analyze the computational complexity of ProHiFlo compared to single-stage approaches.

\begin{proposition}[Time Complexity]
\label{prop:complexity}
For a protein of length $N$ with $M$ atoms per residue on average, the time complexity of ProHiFlo is:
\begin{equation}
\mathcal{O}\left( K_1 \cdot N^2 \cdot d + K_2 \cdot N \cdot M^2 \cdot d \right)
\end{equation}
where $K_1$ and $K_2$ are the number of sampling steps for stages 1 and 2 respectively, and $d$ is the hidden dimension. In contrast, single-stage all-atom generation requires $\mathcal{O}(K \cdot (NM)^2 \cdot d)$.
\end{proposition}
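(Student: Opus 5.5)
The plan is a direct operation count: bound the cost of one network evaluation in each stage, then multiply by the number of integration steps $K_1$ and $K_2$ (Euler steps, with any adaptive step-size control treated as a constant-factor overhead). Functional guidance adds one backward pass through $f_\phi$ per step. I will assume the predictor's cost is at most that of the flow network, so it only changes constants. The adaptive message passing runs all nodes for $K_{\max}=6$ masked iterations, so the layer count is a constant independent of $N$ and $M$ and is absorbed into the $\mathcal{O}(\cdot)$.

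\textbf{Stage 1.} The graph has $N$ frame nodes. In the worst case every pair of residues exchanges messages: the $10$\AA\ cutoff is treated as dense, as are cross-residue attention and the invariant pairwise features $\|\mathbf{x}_i-\mathbf{x}_j\|$ and relative orientations. That gives $\mathcal{O}(N^2)$ messages. Each message $\mathbf{m}_{ij}=\phi_m(\cdot)$ is an MLP on $\mathcal{O}(d)$-dimensional inputs. Counting it as $\mathcal{O}(d)$ (or $\mathcal{O}(d^2)$ if $d$ is not treated as fixed width, which does not affect the comparison) gives $\mathcal{O}(N^2 d)$ per evaluation. The node updates $\phi_h$ and $\phi_x$ add only $\mathcal{O}(Nd)$. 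Over $K_1$ steps this is $\mathcal{O}(K_1 N^2 d)$.

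\textbf{Stage 2.} This is where the hierarchy pays off, and it is the step needing care. Atoms are parameterized in local residue frames, and the fine-scale edges use the $4$\AA\ and $6$\AA\ cutoffs. So intra-residue atom--atom interactions cost $\mathcal{O}(M^2 d)$ per residue, and $\mathcal{O}(NM^2 d)$ over all residues. The neighboring-residue geometric features add a constant multiple of this, because each residue has $\mathcal{O}(1)$ sequence neighbors, giving $\mathcal{O}(M^2)$ atom pairs per neighbor pair.

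The main obstacle is the cross-attention from atoms to backbone frames. Taken literally, it costs $\mathcal{O}(NM\cdot N\cdot d)=\mathcal{O}(N^2Md)$, which is not in the stated bound. I would first handle this by restricting cross-attention to a bounded spatial neighborhood of frames, consistent with the cutoff-based multi-scale graph. Under bounded density, each atom then sees $\mathcal{O}(1)$ frames, so the cost is $\mathcal{O}(NMd)\subseteq\mathcal{O}(NM^2d)$. Alternatively, one can note that $N^2Md\le K_1N^2d\cdot M$ and state this as an assumption. Either way I will make the locality assumption explicit. Summing over $K_2$ steps gives $\mathcal{O}(K_2 N M^2 d)$, and adding the two stages gives the claimed total.

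\textbf{Single-stage comparison.} A single-stage all-atom model operates on $NM$ nodes with no frame factorization. It uses the same worst-case dense-pair accounting as Stage 1 (global attention or long-range cutoffs), so it has $\mathcal{O}((NM)^2)$ pairs. Each costs $\mathcal{O}(d)$ over $K$ steps, giving $\mathcal{O}(K(NM)^2d)$. To close, I would note that the ratio is favorable whenever $K_1\lesssim K M^2$ and $K_2\lesssim KN$. This holds for the reported $K_1=50$, $K_2=20$, $K=150$.
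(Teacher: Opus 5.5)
Your skeleton matches the paper's proof. It counts per-step cost times number of steps: dense pairwise interactions over the $N$ frames give $\mathcal{O}(N^2d)$ in Stage~1, and intra-residue atom blocks give $\mathcal{O}(NM^2d)$ in Stage~2.

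The one real divergence is the cross-attention term, and there you are more careful than the paper. Its proof books ``cross-attention to $N$ backbone frames'' at $\mathcal{O}(NMd)$. That is only right if each atom attends to $\mathcal{O}(1)$ frames. With global cross-attention, which is what the method section's ``long-range backbone-atom communication'' describes, the cost is $\mathcal{O}(N^2Md)$, exactly as you say. So your bounded-neighbourhood assumption is not a weakness peculiar to your argument. It is the hypothesis the paper's count silently needs, and you are right to keep it explicit.

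Two small repairs are needed.

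\begin{enumerate}
\item Your fallback inequality $N^2Md\le K_1N^2d\cdot M$ is trivially true and absorbs nothing. To fold the Stage~2 cross-attention cost $K_2N^2Md$ into $K_1N^2d$ you need $K_2M\lesssim K_1$. That is a genuine extra hypothesis, since $M$ is a free parameter of the bound. With the reported $K_2=20$, $M\approx 8$, $K_1=50$ it holds only up to a factor of about $3$.
\item The parenthetical $\mathcal{O}(d^2)$-per-message option does change the stated bound: Stage~1 would become $\mathcal{O}(K_1N^2d^2)$. To prove the proposition as written, you have to commit to the paper's attention-style accounting of $\mathcal{O}(d)$ per pair. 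The $\mathcal{O}(Nd^2)$ projection cost is then dominated when $d\lesssim N$.
\end{enumerate}

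Your derivation of the single-stage $\mathcal{O}(K(NM)^2d)$ cost under a dense-pair assumption goes beyond the paper. So do your ratio conditions $K_1\lesssim KM^2$ and $K_2\lesssim KN$. The paper simply asserts the single-stage cost and then appeals informally to $M\ll N$ and $K_1+K_2<K$.
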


\begin{proof}
Stage 1 operates on $N$ residue frames with pairwise attention, giving $\mathcal{O}(N^2 \cdot d)$ per step. Stage 2 processes $N \cdot M$ atoms but with local attention within residues and cross-attention to $N$ backbone frames, yielding $\mathcal{O}(N \cdot M^2 \cdot d + N \cdot M \cdot d) = \mathcal{O}(N \cdot M^2 \cdot d)$ per step. The total is the sum over $K_1$ and $K_2$ steps respectively.
\end{proof}

\paragraph{Adaptive Overhead.}
The adaptive message passing mechanism introduces additional overhead of $\mathcal{O}(N \cdot (K_{\max} - \bar{K}))$ where $\bar{K}$ is the average iteration count. In practice, this overhead is approximately 15\% as most nodes converge to low iteration counts. The memory overhead is negligible as we reuse buffers across iterations.

Since $M \ll N$ typically (average $M \approx 8$ atoms per residue), and $K_1 + K_2 < K$, our hierarchical approach achieves significant speedup while maintaining accuracy.

\subsection{Guidance Optimality}

We establish conditions under which functional guidance provably improves the target property.

\begin{theorem}[Guidance Optimality]
\label{thm:guidance}
Let $f_\phi: \mathcal{X} \to \mathbb{R}$ be an $L$-Lipschitz function predictor with gradient $\nabla f_\phi$. For guidance strength $\lambda > 0$, the guided sampling distribution $\tilde{p}_\theta$ satisfies:
\begin{equation}
\mathbb{E}_{\tilde{p}_\theta}[f_\phi(x)] \geq \mathbb{E}_{p_\theta}[f_\phi(x)] + \lambda \cdot \text{Var}_{p_\theta}[\nabla f_\phi(x)] - \mathcal{O}(\lambda^2 L^2)
\end{equation}
The optimal guidance strength is $\lambda^* = \text{Var}[\nabla f_\phi] / (2L^2)$.
\end{theorem}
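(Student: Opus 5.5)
The plan is to read the guided sampler as a perturbation of the unguided one and expand $f_\phi$ to second order in $\lambda$. I will work with a one-step (or terminal-step) idealization. If $x \sim p_\theta$ is the unguided terminal sample, the guided terminal sample is taken to be $\tilde x = x + \lambda \nabla f_\phi(x) + r(x)$. The remainder $r$ collects how the guidance perturbation propagates through the remaining flow. Since $v_\theta$ is Lipschitz by (R2), a Grönwall argument bounds $\|r\| = \mathcal{O}(\lambda^2)$ uniformly. So the first task is to justify $\tilde p_\theta = (\mathrm{id} + \lambda \nabla f_\phi)_\# p_\theta$ up to $\mathcal{O}(\lambda^2)$ in the displacement. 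The annealing schedule $\lambda(t)$ can be absorbed into an effective constant $\lambda$ by integrating it over the trajectory.

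Next I Taylor-expand $f_\phi$ along the displacement:
\begin{equation*}
f_\phi(\tilde x) = f_\phi(x) + \lambda \|\nabla f_\phi(x)\|^2 + R_2(x).
\end{equation*}
Taking expectations over $p_\theta$ gives the first-order gain $\lambda\, \mathbb{E}_{p_\theta}\|\nabla f_\phi\|^2$. Because $\mathbb{E}\|\nabla f_\phi\|^2 = \mathrm{tr}\,\mathrm{Cov}(\nabla f_\phi) + \|\mathbb{E}\nabla f_\phi\|^2 \ge \mathrm{tr}\,\mathrm{Cov}(\nabla f_\phi)$, this gain is at least $\lambda\, \mathrm{Var}_{p_\theta}[\nabla f_\phi]$, where the variance of the vector-valued gradient is read as the trace of its covariance. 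This gives the claimed linear term, and the inequality is loose by exactly the squared mean gradient.

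The main obstacle is the remainder. $L$-Lipschitzness alone gives $\|\nabla f_\phi\| \le L$, so the displacement has norm at most $\lambda L$. However, a second-order bound on $R_2$ needs curvature control. I will add the (implicitly needed) assumption that $\nabla f_\phi$ is $L$-Lipschitz, i.e.\ $f_\phi$ is $L$-smooth; this is the reading under which the constants match. The descent lemma then gives
\begin{equation*}
|R_2| \le \tfrac{L}{2}\|\lambda \nabla f_\phi\|^2 \le \tfrac{L^3\lambda^2}{2}.
\end{equation*}
The Grönwall remainder $r$ contributes at most $L\|r\| = \mathcal{O}(\lambda^2)$. Normalizing constants so that the combined quadratic term is $\lambda^2 L^2$ yields the stated $-\mathcal{O}(\lambda^2 L^2)$. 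I expect to have to state this smoothness assumption explicitly rather than derive it from the hypotheses as written.

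Finally, for the optimal strength I maximize the lower bound $g(\lambda) = \lambda V - \lambda^2 L^2$, with $V = \mathrm{Var}_{p_\theta}[\nabla f_\phi]$, taking the $\mathcal{O}(\cdot)$ constant equal to one. Setting $g'(\lambda) = 0$ gives $\lambda^* = V/(2L^2)$ and a guaranteed gain of $V^2/(4L^2)$. This $\lambda^*$ is optimal only for the bound, not necessarily for the true guided expectation, and it lies in the small-$\lambda$ regime where the expansion is valid whenever $V \lesssim L^2$. Since $V \le \mathbb{E}\|\nabla f_\phi\|^2 \le L^2$, that condition holds automatically.
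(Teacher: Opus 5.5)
Your skeleton is the same as the paper's: Taylor-expand $f_\phi$ along the guidance displacement, lower-bound $\mathbb{E}\|\nabla f_\phi\|^2$ by $\operatorname{tr}\mathrm{Cov}(\nabla f_\phi)$, then maximize the resulting quadratic. You are also right that Lipschitzness of $f_\phi$ alone does not control the second-order term. The genuine gap is your reduction of the guided ODE to the one-step map $x\mapsto x+\lambda\nabla f_\phi(x)$. Gr\"onwall does not give $\|r\|=\mathcal{O}(\lambda^2)$. Apply it to $\delta_t=\tilde x_t-x_t$, where $\dot\delta_t=v_\theta(\tilde x_t,t)-v_\theta(x_t,t)+\lambda\nabla f_\phi(\tilde x_t)$ and $\delta_0=0$. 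It bounds only the size of the whole displacement, $\|\delta_1\|\le\lambda L e^{L_1}$, which is first order in $\lambda$. Linearizing instead (for $v_\theta$ with Lipschitz Jacobian) gives
\[
\tilde x_1-x_1=\lambda\int_0^1\Phi(1,s)\,\nabla f_\phi(x_s)\,ds+\mathcal{O}(\lambda^2),
\]
where $\Phi(1,s)$ is the Jacobian of the unguided flow map from time $s$ to time $1$. Hence $r=\lambda\bigl(\int_0^1\Phi(1,s)\nabla f_\phi(x_s)\,ds-\nabla f_\phi(x_1)\bigr)+\mathcal{O}(\lambda^2)$ is itself of order $\lambda$. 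The true linear coefficient is $\mathbb{E}\int_0^1\langle\nabla f_\phi(x_1),\Phi(1,s)\nabla f_\phi(x_s)\rangle\,ds$, which is not bounded below by $\mathbb{E}\|\nabla f_\phi(x_1)\|^2$. Absorbing the annealing schedule into an effective constant presupposes the same identification. That is especially strained because $\lambda(t)=\lambda_0(1-t)^\gamma$ vanishes at $t=1$ and acts mostly early, where the transport by $\Phi(1,s)$ runs longest.

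This cannot be repaired under the stated hypotheses, even with your added smoothness assumption. In $\mathbb{R}^2$, take $v_\theta(x,t)=\tfrac{3\pi}{2}Jx$ with $J$ the rotation generator, and $f_\phi(x)=\langle a,x\rangle$. Everything is Lipschitz and smooth, and $\mathrm{Var}[\nabla f_\phi]=0$. The guided ODE differs from the unguided one only by the constant drift $\lambda a$, so exactly
\[
f_\phi(\tilde x_1)-f_\phi(x_1)=\lambda\|a\|^2\int_0^1\cos(\tfrac{3\pi}{2}u)\,du=-\tfrac{2}{3\pi}\lambda\|a\|^2,
\]
which violates $\ge-\mathcal{O}(\lambda^2L^2)$ for small $\lambda$. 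The paper's proof contains the same hidden step, so following it will not close the gap. It writes the first-order term as $\lambda\int_0^1\|\nabla f_\phi(x_t)\|^2\,dt$, which amounts to setting $\Phi(1,t)=I$ and $\nabla f_\phi(x_1)=\nabla f_\phi(x_t)$.

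What your argument does prove rigorously is the terminal-step version. If $\tilde x=x+\lambda\nabla f_\phi(x)$ and $\nabla f_\phi$ is $\beta$-Lipschitz, the descent lemma gives $\mathbb{E}f_\phi(\tilde x)\ge\mathbb{E}f_\phi(x)+\lambda\bigl(1-\tfrac{\beta\lambda}{2}\bigr)\mathbb{E}\|\nabla f_\phi\|^2$ for every $\lambda>0$, with no Gr\"onwall step needed. For the actual ODE sampler you would have to add an explicit hypothesis lower-bounding the transported alignment $\mathbb{E}\int_0^1\langle\nabla f_\phi(x_1),\Phi(1,s)\nabla f_\phi(x_s)\rangle\,ds$.
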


\begin{proof}
The guided velocity field is $\tilde{v}_t(x) = v_\theta(x, t) + \lambda \nabla f_\phi(x)$. By Taylor expansion around the unguided trajectory:
\begin{align}
f_\phi(\tilde{x}_1) &= f_\phi(x_1) + \langle \nabla f_\phi(x_1), \tilde{x}_1 - x_1 \rangle + \mathcal{O}(\|\tilde{x}_1 - x_1\|^2) \\
&= f_\phi(x_1) + \lambda \int_0^1 \|\nabla f_\phi(x_t)\|^2 dt + \mathcal{O}(\lambda^2 L^2)
\end{align}
Taking expectations and using $\mathbb{E}[\|\nabla f\|^2] = \text{Var}[\nabla f] + \|\mathbb{E}[\nabla f]\|^2 \geq \text{Var}[\nabla f]$ yields the bound.
\end{proof}

\paragraph{Relaxing the Lipschitz Assumption.}
The Lipschitz assumption may be violated for deep neural network predictors. In this case, we can replace the global Lipschitz constant $L$ with a local estimate $\hat{L}(x) = \|\nabla^2 f_\phi(x)\|$ and use adaptive guidance $\lambda(x) = \lambda_0 / (1 + \alpha \hat{L}(x))$. Empirically, we find that gradient clipping to $\|\nabla f_\phi\| \leq G_{\max}$ with $G_{\max} = 10$ effectively handles non-Lipschitz predictors while preserving guidance effectiveness.

This theorem justifies our empirical finding that moderate guidance strengths achieve the best property improvement (Figure~\ref{fig:guidance}).

\section{Detailed Derivations}
\label{app:proofs}

\subsection{SE(3) Flow Matching on Protein Frames}

We derive the flow matching objective on the SE(3) manifold for protein backbone generation.

\paragraph{Parameterization.} A residue frame $T = (R, \mathbf{t}) \in \text{SE}(3)$ consists of a rotation $R \in \text{SO}(3)$ and translation $\mathbf{t} \in \mathbb{R}^3$. The tangent space at $T$ is $T_T\text{SE}(3) \cong \mathfrak{se}(3) = \mathfrak{so}(3) \times \mathbb{R}^3$.

\paragraph{Interpolation Path.} For source frame $T_0$ and target frame $T_1$, we define the interpolation:
\begin{align}
R_t &= R_0 \cdot \exp\left( t \cdot \log(R_0^\top R_1) \right) \\
\mathbf{t}_t &= (1-t) \mathbf{t}_0 + t \mathbf{t}_1
\end{align}
where $\exp: \mathfrak{so}(3) \to \text{SO}(3)$ and $\log: \text{SO}(3) \to \mathfrak{so}(3)$ are the exponential and logarithm maps.

\paragraph{Vector Field.} The conditional vector field that generates this path is:
\begin{align}
u_t^R(R_t | T_1) &= \log(R_0^\top R_1) \in \mathfrak{so}(3) \\
u_t^\mathbf{t}(\mathbf{t}_t | T_1) &= \mathbf{t}_1 - \mathbf{t}_0 \in \mathbb{R}^3
\end{align}

\paragraph{Training Objective.} The SE(3) flow matching loss becomes:
\begin{equation}
\mathcal{L}_{\text{SE}(3)\text{-FM}} = \mathbb{E}_{t, T_0, T_1} \left[ \| v_\theta^R(T_t, t) - u_t^R \|_F^2 + \| v_\theta^\mathbf{t}(T_t, t) - u_t^\mathbf{t} \|_2^2 \right]
\end{equation}
where $\|\cdot\|_F$ denotes the Frobenius norm on $\mathfrak{so}(3)$.

\subsection{Adaptive Message Passing Derivation}

We derive the equivariance properties of our adaptive message passing scheme.

\begin{lemma}[SE(3) Equivariance]
The message passing update
\begin{align}
\mathbf{h}_i' &= \phi_h\left(\mathbf{h}_i, \sum_{j \in \mathcal{N}(i)} \mathbf{m}_{ij}\right) \\
\mathbf{x}_i' &= \mathbf{x}_i + \sum_{j \in \mathcal{N}(i)} (\mathbf{x}_j - \mathbf{x}_i) \phi_x(\mathbf{m}_{ij})
\end{align}
is SE(3)-equivariant, i.e., for any $g = (R, \mathbf{t}) \in \text{SE}(3)$:
\begin{equation}
\text{MP}(g \cdot X, H) = g \cdot \text{MP}(X, H)
\end{equation}
where $g \cdot X$ denotes applying the transformation to all coordinates.
\end{lemma}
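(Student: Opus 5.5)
The plan is to verify the two updates separately. Throughout, $g=(R,\mathbf{t})$ acts on coordinates by $\mathbf{x}\mapsto R\mathbf{x}+\mathbf{t}$ and acts trivially on the features $H$. Equivariance of $\text{MP}$ then amounts to two claims: the feature update $\mathbf{h}_i'$ is \emph{invariant}, and the coordinate update $\mathbf{x}_i'$ transforms as $\mathbf{x}_i'\mapsto R\mathbf{x}_i'+\mathbf{t}$.

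\textbf{Step 1: invariance of the messages.} The message $\mathbf{m}_{ij}=\phi_m(\mathbf{h}_i,\mathbf{h}_j,\|\mathbf{x}_i-\mathbf{x}_j\|,e_{ij})$ depends on coordinates only through the pairwise distance. Since
\[
\|(R\mathbf{x}_i+\mathbf{t})-(R\mathbf{x}_j+\mathbf{t})\|=\|R(\mathbf{x}_i-\mathbf{x}_j)\|=\|\mathbf{x}_i-\mathbf{x}_j\|
\]
for orthogonal $R$, the message is unchanged under $g$. One assumption must be made explicit here: the edge attributes $e_{ij}$ are themselves $g$-invariant, for example built from distances or from frame-relative quantities.

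\textbf{Step 2: invariance of the graph.} The neighbourhoods $\mathcal{N}(i)$ must be invariant. For the cutoff graph $\mathcal{E}=\{(i,j):\|\mathbf{x}_i-\mathbf{x}_j\|<r_{\text{cut}}^{(l)}\}$ this is immediate, since it is defined by distances. The same holds for the adaptive iteration counts $K_i$, because $c_i$ is computed from $n_i,\rho_i,\kappa_i$, and these are functions of distances and angles. With messages and neighbourhoods both invariant, $\mathbf{h}_i'$ is invariant, since $\phi_h$ sees only invariant inputs.

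\textbf{Step 3: the coordinate update.} Substituting the transformed coordinates gives
\[
R\mathbf{x}_i+\mathbf{t}+\sum_{j}\bigl(R\mathbf{x}_j-R\mathbf{x}_i\bigr)\phi_x(\mathbf{m}_{ij})=R\Bigl(\mathbf{x}_i+\sum_j(\mathbf{x}_j-\mathbf{x}_i)\phi_x(\mathbf{m}_{ij})\Bigr)+\mathbf{t}.
\]
This holds because the translations cancel inside the relative vectors and $\phi_x(\mathbf{m}_{ij})$ is an invariant scalar by Step 1; the right-hand side is exactly $R\mathbf{x}_i'+\mathbf{t}$.

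\textbf{Step 4: extension to the full adaptive scheme.} An induction over the $K_{\max}$ layers extends the result, since a composition of equivariant maps is equivariant. The masking is harmless: a node that stops updating applies the identity map, and the stopping pattern is invariant by Step 2.

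The main obstacle is bookkeeping rather than computation. One must state precisely which inputs are assumed invariant: $e_{ij}$, the curvature feature $\kappa_i$, and any vector channels, which should transform by $R$ alone. Reflections need a separate remark: the argument actually gives $E(3)$-equivariance, so the SE(3) claim holds a fortiori.
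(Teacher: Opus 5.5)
Your proposal is correct and takes the same route as the paper: the messages, and hence $\mathbf{h}_i'$, are invariant because coordinates enter only through pairwise distances, and direct substitution shows that the translation cancels in $\mathbf{x}_j-\mathbf{x}_i$ while $R$ factors out of the sum, giving $R\mathbf{x}_i'+\mathbf{t}$. Your extra bookkeeping (invariance of $e_{ij}$, of the cutoff neighbourhoods, and of the masked adaptive iterations) makes explicit assumptions that the paper's proof leaves implicit; the one caveat is your closing $\mathrm{E}(3)$ remark, which holds only if $e_{ij}$ and $\kappa_i$ are also reflection-invariant, and the frame-relative attributes you suggest (whose frames are built with a cross product) generally are not.
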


\begin{proof}
The scalar features $\mathbf{h}_i$ are invariant since they depend only on distances $\|\mathbf{x}_i - \mathbf{x}_j\|$ which are SE(3)-invariant. The coordinate update is equivariant because:
\begin{align}
(R\mathbf{x}_i + \mathbf{t})' &= R\mathbf{x}_i + \mathbf{t} + \sum_{j} (R\mathbf{x}_j + \mathbf{t} - R\mathbf{x}_i - \mathbf{t}) \phi_x(\mathbf{m}_{ij}) \\
&= R\mathbf{x}_i + \mathbf{t} + R\sum_{j} (\mathbf{x}_j - \mathbf{x}_i) \phi_x(\mathbf{m}_{ij}) \\
&= R\mathbf{x}_i' + \mathbf{t}
\end{align}
\end{proof}

\section{Additional Experimental Details}
\label{app:experiments}

\subsection{Dataset Statistics}

\begin{table}[h]
\centering
\caption{Training dataset statistics.}
\begin{tabular}{lcc}
\toprule
\textbf{Source} & \textbf{Structures} & \textbf{Avg. Length} \\
\midrule
PDB (filtered) & 73,582 & 187.3 \\
AlphaFold DB & 127,418 & 234.6 \\
\midrule
Total & 201,000 & 217.2 \\
\bottomrule
\end{tabular}
\end{table}

\subsection{Hyperparameter Settings}

\begin{table}[h]
\centering
\caption{Model hyperparameters.}
\begin{tabular}{lcc}
\toprule
\textbf{Parameter} & \textbf{Stage 1} & \textbf{Stage 2} \\
\midrule
Hidden dimension & 384 & 256 \\
Number of layers & 12 & 8 \\
Attention heads & 12 & 8 \\
Dropout & 0.1 & 0.1 \\
Learning rate & $3 \times 10^{-4}$ & $3 \times 10^{-4}$ \\
Batch size & 256 & 128 \\
Training steps & 500K & 300K \\
\bottomrule
\end{tabular}
\end{table}

\subsection{Evaluation Protocol}

For each generated structure, we:
\begin{enumerate}
\item Design 8 sequences using ProteinMPNN with sampling temperature 0.1
\item Predict structures for all sequences using ESMFold
\item Compute scTM score between generated backbone and predicted structures
\item Report designability as fraction with max(scTM) $> 0.5$
\end{enumerate}

\section{Additional Results}
\label{app:results}

\subsection{Per-Length Designability Breakdown}

\begin{table}[h]
\centering
\caption{Designability breakdown by protein length.}
\begin{tabular}{lccccc}
\toprule
\textbf{Method} & \textbf{50-100} & \textbf{100-150} & \textbf{150-200} & \textbf{200-250} & \textbf{250-300} \\
\midrule
RFDiffusion & 0.912 & 0.867 & 0.798 & 0.723 & 0.651 \\
Chroma & 0.894 & 0.834 & 0.756 & 0.689 & 0.612 \\
FoldFlow-2 & 0.923 & 0.889 & 0.834 & 0.778 & 0.712 \\
\textbf{ProHiFlo} & \textbf{0.967} & \textbf{0.945} & \textbf{0.912} & \textbf{0.878} & \textbf{0.834} \\
\bottomrule
\end{tabular}
\end{table}

\subsection{Functional Guidance with Different Predictors}

\begin{table}[h]
\centering
\caption{Guidance effectiveness with different function predictors.}
\begin{tabular}{lccc}
\toprule
\textbf{Predictor} & \textbf{Base Score} & \textbf{Guided Score} & \textbf{Improvement} \\
\midrule
ESM-2 Stability & 0.698 & 0.867 & +24.2\% \\
GVP Binding & 0.612 & 0.784 & +28.1\% \\
DeepSol Solubility & 0.654 & 0.823 & +25.8\% \\
ProteinMPNN pLDDT & 0.756 & 0.891 & +17.9\% \\
\bottomrule
\end{tabular}
\end{table}

\section{Hyperparameter Ablation Studies}
\label{app:hyperparameter_ablation}

\subsection{Sampling Steps Ablation}

\begin{table}[h]
\centering
\caption{Effect of sampling steps on designability and inference time. Stage 2 fixed at 20 steps.}
\begin{tabular}{ccccc}
\toprule
\textbf{Stage 1 Steps} & \textbf{Designability} & \textbf{Novelty} & \textbf{Time (s)} & \textbf{Validity} \\
\midrule
20 & 0.856{\scriptsize $\pm$.024} & 0.712{\scriptsize $\pm$.028} & 1.2 & 0.978 \\
30 & 0.889{\scriptsize $\pm$.019} & 0.734{\scriptsize $\pm$.024} & 1.5 & 0.986 \\
50 & \textbf{0.924}{\scriptsize $\pm$.012} & \textbf{0.758}{\scriptsize $\pm$.018} & 2.1 & \textbf{0.994} \\
75 & 0.927{\scriptsize $\pm$.011} & 0.761{\scriptsize $\pm$.017} & 2.9 & 0.995 \\
100 & 0.928{\scriptsize $\pm$.011} & 0.762{\scriptsize $\pm$.016} & 3.8 & 0.995 \\
\bottomrule
\end{tabular}
\end{table}

\begin{table}[h]
\centering
\caption{Effect of Stage 2 sampling steps. Stage 1 fixed at 50 steps.}
\begin{tabular}{ccccc}
\toprule
\textbf{Stage 2 Steps} & \textbf{Designability} & \textbf{All-Atom RMSD} & \textbf{Time (s)} & \textbf{Validity} \\
\midrule
10 & 0.912{\scriptsize $\pm$.015} & 0.42{\scriptsize $\pm$.08} & 1.8 & 0.987 \\
20 & \textbf{0.924}{\scriptsize $\pm$.012} & \textbf{0.31}{\scriptsize $\pm$.06} & 2.1 & \textbf{0.994} \\
30 & 0.925{\scriptsize $\pm$.012} & 0.29{\scriptsize $\pm$.05} & 2.5 & 0.994 \\
50 & 0.926{\scriptsize $\pm$.011} & 0.28{\scriptsize $\pm$.05} & 3.2 & 0.995 \\
\bottomrule
\end{tabular}
\end{table}

\subsection{Guidance Annealing Parameter $\gamma$}

\begin{table}[h]
\centering
\caption{Effect of guidance annealing parameter $\gamma$ on stability-guided generation.}
\begin{tabular}{ccccc}
\toprule
\textbf{$\gamma$} & \textbf{Stability} & \textbf{Designability} & \textbf{Diversity} & \textbf{Mode Collapse} \\
\midrule
0.0 (no annealing) & 0.823{\scriptsize $\pm$.028} & 0.856{\scriptsize $\pm$.021} & 0.612{\scriptsize $\pm$.034} & 12.3\% \\
0.5 & 0.856{\scriptsize $\pm$.024} & 0.878{\scriptsize $\pm$.018} & 0.698{\scriptsize $\pm$.028} & 6.8\% \\
1.0 & \textbf{0.867}{\scriptsize $\pm$.019} & \textbf{0.897}{\scriptsize $\pm$.014} & \textbf{0.734}{\scriptsize $\pm$.024} & \textbf{3.2\%} \\
2.0 & 0.854{\scriptsize $\pm$.022} & 0.889{\scriptsize $\pm$.016} & 0.756{\scriptsize $\pm$.021} & 2.1\% \\
\bottomrule
\end{tabular}
\end{table}

\subsection{Adaptive Message Passing Bounds}

\begin{table}[h]
\centering
\caption{Effect of adaptive message passing bounds $K_{\min}$ and $K_{\max}$.}
\begin{tabular}{cccccc}
\toprule
\textbf{$K_{\min}$} & \textbf{$K_{\max}$} & \textbf{Designability} & \textbf{Time (s)} & \textbf{Avg. Iterations} \\
\midrule
1 & 4 & 0.878{\scriptsize $\pm$.022} & 1.6 & 2.1 \\
2 & 4 & 0.901{\scriptsize $\pm$.018} & 1.8 & 2.8 \\
2 & 6 & \textbf{0.924}{\scriptsize $\pm$.012} & 2.1 & 3.4 \\
2 & 8 & 0.926{\scriptsize $\pm$.011} & 2.6 & 4.1 \\
4 & 8 & 0.921{\scriptsize $\pm$.013} & 3.1 & 5.2 \\
\bottomrule
\end{tabular}
\end{table}

\section{PDB-Only Training Results}
\label{app:pdb_only}

To address potential bias from using AlphaFold-predicted structures in training, we report results for a model trained exclusively on PDB structures.

\begin{table}[h]
\centering
\caption{Comparison of models trained on PDB+AlphaFold vs. PDB-only.}
\begin{tabular}{lcccc}
\toprule
\textbf{Training Data} & \textbf{Designability} & \textbf{Novelty} & \textbf{Diversity} & \textbf{Validity} \\
\midrule
PDB + AlphaFold & \textbf{0.924}{\scriptsize $\pm$.012} & 0.758{\scriptsize $\pm$.018} & 0.769{\scriptsize $\pm$.015} & \textbf{0.994}{\scriptsize $\pm$.003} \\
PDB only & 0.901{\scriptsize $\pm$.016} & \textbf{0.782}{\scriptsize $\pm$.021} & \textbf{0.791}{\scriptsize $\pm$.018} & 0.989{\scriptsize $\pm$.005} \\
\bottomrule
\end{tabular}
\end{table}

The PDB-only model shows slightly lower designability (-2.3\%) but improved novelty (+2.4\%) and diversity (+2.2\%), suggesting that AlphaFold structures may introduce some distributional bias toward well-folded conformations. Both models substantially outperform baselines.

\section{Fair Comparison at Equal Sampling Steps}
\label{app:fair_comparison}

To ensure fair comparison, we evaluate all methods at equal sampling budgets.

\begin{table}[h]
\centering
\caption{Performance at 50 sampling steps for all methods.}
\begin{tabular}{lccc}
\toprule
\textbf{Method} & \textbf{Designability} & \textbf{Validity} & \textbf{Time (s)} \\
\midrule
RFDiffusion (50 steps) & 0.623{\scriptsize $\pm$.034} & 0.912{\scriptsize $\pm$.018} & 5.6 \\
Chroma (50 steps) & 0.598{\scriptsize $\pm$.038} & 0.897{\scriptsize $\pm$.021} & 4.8 \\
FoldFlow-2 (50 steps) & 0.756{\scriptsize $\pm$.028} & 0.956{\scriptsize $\pm$.012} & 3.2 \\
\textbf{ProHiFlo (50 steps)} & \textbf{0.924}{\scriptsize $\pm$.012} & \textbf{0.994}{\scriptsize $\pm$.003} & \textbf{2.1} \\
\bottomrule
\end{tabular}
\end{table}

ProHiFlo maintains a substantial advantage even when baselines are given equal sampling budgets, demonstrating that our improvements stem from architectural and methodological innovations rather than simply using more sampling steps.

\section{Failure Case Analysis}
\label{app:failures}

We analyze the 7.6\% of generated structures that fail the designability criterion (scTM $< 0.5$).

\paragraph{Failure Modes.}
\begin{itemize}
\item \textbf{Long loops} (42\% of failures): Structures with extended loop regions ($>15$ residues) show reduced designability due to conformational flexibility.
\item \textbf{Unusual topologies} (28\%): Novel fold topologies not well-represented in the training data.
\item \textbf{High $\beta$-sheet content} (18\%): All-$\beta$ structures are more challenging due to long-range hydrogen bonding patterns.
\item \textbf{Hierarchical inconsistency} (12\%): Cases where backbone and all-atom stages produce conflicting local geometries.
\end{itemize}

\paragraph{Recommendations.}
For applications requiring high success rates, we recommend: (1) filtering generated structures by predicted pLDDT $> 80$; (2) using ensemble generation with 5-10 samples per target; (3) applying functional guidance toward stability to bias toward well-folded structures.

\end{document}